\documentclass{article}

\usepackage{microtype}
\usepackage{graphicx}
\usepackage{subcaption}
\usepackage{booktabs} % for professional tables

\usepackage{hyperref}

\usepackage[preprint]{icml2026}

\usepackage{amsmath}
\usepackage{amssymb}
\usepackage{mathtools}
\usepackage{amsthm}

\usepackage[capitalize,noabbrev]{cleveref}

\theoremstyle{plain}
\newtheorem{theorem}{Theorem}[section]

\newtheorem{lemma}{Lemma}[section]
\newtheorem{corollary}{Corollary}[section]
\theoremstyle{definition}

\newtheorem{assumption}{Assumption}
\theoremstyle{remark}
\newtheorem{remark}[theorem]{Remark}

\usepackage{hyperref}
\usepackage{enumitem}

\newcommand{\brk}[1]{\left[ #1 \right]}

\newcommand{\prt}[1]{\left( #1 \right)}
\newcommand{\bV}{\mathbb{V}}
\newcommand{\sg}{\operatorname{sg}}
\usepackage{soul}

\usepackage{amssymb}
\usepackage{amsfonts,amsmath,bm}
\usepackage{amsthm}
\usepackage{booktabs,array}
\usepackage{caption}
\usepackage{subcaption}
\usepackage{enumitem}
\usepackage{mathtools}
\usepackage{thmtools}
\usepackage{bbm}
\usepackage{bm}

\theoremstyle{definition}

\declaretheoremstyle[%
  spaceabove=-8pt,%
  spacebelow=8pt,%
  headfont=\normalfont\itshape,%
  postheadspace=0.5em,%
  qed=\qedsymbol%
]{mystyle}

\usepackage[short]{optidef}    % for opt problem. needs to be before cleverer
\usepackage{cleveref}
\usepackage{caption} 
\usepackage{comment}

\newcommand{\KL}{\mathrm{KL}}
\newcommand{\Qset}{\mathcal{Q}}
\newcommand{\Oset}{\mathcal{O}}
\newcommand{\diag}{\mathrm{diag}}

\usepackage[textsize=tiny]{todonotes}

\usepackage{enumitem}

\icmltitlerunning{Why GRPO Needs Normalization: A Local-Curvature Perspective on Adaptive Gradients}

\begin{document}

\twocolumn[
  \icmltitle{Why GRPO Needs Normalization: \\
  A Local-Curvature Perspective on Adaptive Gradients}

  % It is OKAY to include author information, even for blind submissions: the
  % style file will automatically remove it for you unless you've provided
  % the [accepted] option to the icml2026 package.

  % List of affiliations: The first argument should be a (short) identifier you
  % will use later to specify author affiliations Academic affiliations
  % should list Department, University, City, Region, Country Industry
  % affiliations should list Company, City, Region, Country

  % You can specify symbols, otherwise they are numbered in order. Ideally, you
  % should not use this facility. Affiliations will be numbered in order of
  % appearance and this is the preferred way.
\icmlsetsymbol{equal}{*}
\icmlsetsymbol{co}{$\dagger$}

\begin{icmlauthorlist}
  \icmlauthor{Cheng Ge}{equal,mit}
  \icmlauthor{Caitlyn Heqi Yin}{equal,stat}
  \icmlauthor{Hao Liang}{co,kcl}
  \icmlauthor{Jiawei Zhang}{co,cs}
\end{icmlauthorlist}

\icmlaffiliation{mit}{Department of Aeronautics and Astronautics, MIT}
\icmlaffiliation{stat}{Department of Statistics, University of Wisconsin--Madison}
\icmlaffiliation{kcl}{Department of Informatics, King's College London}
\icmlaffiliation{cs}{Department of Computer Sciences, University of Wisconsin--Madison}

\icmlcorrespondingauthor{Hao Liang}{haoliang.research@gmail.com}
\icmlcorrespondingauthor{Jiawei Zhang}{jzhang2924@wisc.edu}

  % You may provide any keywords that you find helpful for describing your
  % paper; these are used to populate the "keywords" metadata in the PDF but
  % will not be shown in the document
  \icmlkeywords{Machine Learning, ICML}

  \vskip 0.3in
]

% this must go after the closing bracket ] following \twocolumn[ ...

% This command actually creates the footnote in the first column listing the
% affiliations and the copyright notice. The command takes one argument, which
% is text to display at the start of the footnote. The \icmlEqualContribution
% command is standard text for equal contribution. Remove it (just {}) if you
% do not need this facility.

% Use ONE of the following lines. DO NOT remove the command.
% If you have no special notice, KEEP empty braces:
%\printAffiliationsAndNotice{}  % no special notice (required even if empty)
% Or, if applicable, use the standard equal contribution text:
\printAffiliationsAndNotice{\icmlEqualContribution}

\begin{abstract}
Reinforcement learning (RL) has become a key driver of language model reasoning. Among RL algorithms, Group Relative Policy Optimization (GRPO) is the de facto standard, avoiding the need for a critic by using per-prompt baselines and variance normalization. Yet why and when this normalization helps remains unclear. In this work, we provide an explanation through the lens of local curvature of the sequence-level policy gradient: standard deviation normalization implements an adaptive gradient.  Theoretically, under mild conditions, GRPO enjoys a strictly improved convergence rate over unnormalized REINFORCE, with gains characterized by the average within-prompt reward standard deviation across prompts and iterations. Empirically, our analysis on GSM8K and MATH benchmarks reveals three distinct training phases governed by the interplay between feature orthogonality and reward variance: (I) an early acceleration phase where high variance and orthogonality favor adaptive scaling; (II) a relatively stable transition phase; and (III) a late-stage regime where the loss of orthogonality limits further gains. Together, these results provide a principled account of when std normalization helps in GRPO, and offer broader insights into the design of critic-free RL algorithms.
%Reinforcement learning (RL) has become a key driver of language model reasoning. Among RL algorithms, Group Relative Policy Optimization (GRPO) is the de facto standard, avoiding the need for a critic by using per-prompt baselines and variance normalization. Yet, the role of normalization remains unclear. In this work, we provide an explanation through the lens of local curvature  of the sequence-level policy gradient. We show that standard deviation normalization implements an adaptive gradient, improving convergence when curvature varies across prompts and across iterations.Furthermore, empirical studies on synthetic tasks and GSM8K confirm that normalization consistently improves stability and convergence, especially on harder problems with high reward variance. By establishing the connection between normalization and adaptive gradient, we provide a theoretical foundation for the empirical success of GRPO and offers broader insights into the design of critic-free RL algorithms for LLM training.
\end{abstract}

\section{Introduction}
Large language models (LLMs) have recently exhibited striking gains in multi-step reasoning, particularly when a lightweight reinforcement-learning (RL) stage is applied on top of a strong and pretrained base model. Among the many post-training recipes, Group Relative Policy Optimization (GRPO) has emerged as a practical, critic-free alternative that has powered some of the most visible reasoning systems \cite{shao2024deepseekmath}, where it consistently improves solution accuracy under tight compute budgets.  

While Proximal Policy Optimization (PPO) \cite{schulman2017proximal} remains a popular default in RLHF pipelines, it couples the policy with a learned value-function critic (and often GAE \cite{schulman2015high}). This increases memory footprint and implementation complexity, since the environment is a pretrained LLM and rewards arrive only at the end of whole sequences. This has renewed interest in critic-free policy-gradient methods that operate at the sequence level, such as REINFORCE-style method, including  ReMax \cite{li2023remax}, and RLOO \cite{ahmadian2024back}, which often match or outperform PPO for LLM alignment while being simpler and lighter-weight.

Classical REINFORCE \cite{williams1992simple} reduces gradient variance by subtracting a baseline (e.g., a running mean reward), yielding an unbiased estimator with lower variance \cite{greensmith2004variance}. This is textbook policy-gradient variance reduction. GRPO goes one step further: for each prompt, it samples multiple responses from the current policy, computes the group mean reward as a baseline, and normalizes each response's update by the within-prompt reward standard deviation, which effectively uses a per-prompt z-scored advantage \cite{shao2024deepseekmath}. Empirically, this simple normalization has been repeatedly observed to stabilize optimization and improve sample-efficiency in LLM RL \cite{lambert2025reinforcement}. Yet, the underlying mechanism of this normalization step has not been theoretically clarified.

%\hao{focus on GRPO for RLVR in particular, not general LLM alignment RLHF?}
What, exactly, does normalization do? In this work, we provide a principled explanation of why and when GRPO benefits from it. Our key insight is that the reward variance of each question serves as an estimate of the local Lipschitz constant (curvature) of the policy gradient. Standard-deviation normalization therefore implements an adaptive gradient rule, scaling each update by an inverse-curvature proxy and effectively assigning a larger step size to smoother prompts and a smaller step size to sharper prompts.  This mechanism improves stability and convergence precisely when (i) curvature is heterogeneous across prompts and evolves over training, and (ii) cross-prompt interference is controlled so that prompt-wise adaptive steps are not overwhelmed by interaction noise. This perspective explains why GRPO can outperform unnormalized policy gradient methods in practice.

We characterize concrete conditions under which normalization yields gains both theoretically and empirically. To summarize, our contributions are:
\vspace{-1ex}

\begin{itemize}[leftmargin=*]  
  \item \textbf{Curvature interpretation.} We provide a new perspective on GRPO as a principled adaptive gradient mechanism that adapts learning to per-prompt curvature.
  \vspace{-1ex}
  \item \textbf{When normalization provably helps.} We show that under reasonable and mild assumptions, GRPO attains a faster convergence rate than unnormalized REINFORCE, with the speedup explicitly governed by the average per-prompt standard deviation factors over time.
  \vspace{-1ex}
  \item \textbf{When normalization empirically helps and when it saturates}. On GSM8K and MATH, we validate the curvature–variance link and uncover three regimes for normalization: (I) early, near-orthogonal high-variance training where gains can be muted by noisy gradients; (II) a mid-stage with moderate variance and stable interactions where normalization delivers the strongest and most consistent improvements, especially on harder prompts; and (III) a late stage where reduced orthogonality increases interference and the marginal benefit plateaus.
\end{itemize}
\vspace{-1ex}
Our results explain why GRPO needs normalization: it is a principled adaptive gradient mechanism  whose benefits emerge under identifiable geometric and statistical conditions. This  provides a theoretical foundation for the empirical success of GRPO and offers broader insights into the design of critic-free RL algorithms for LLM training.

\subsection{Related works}
\paragraph{REINFORCE-style PG methods (prior to GRPO).} \citet{li2023remax}  proposes a sequence-level REINFORCE-style algorithm, ReMax, for LLM alignment. \citet{ahmadian2024back} proposes RLOO, which samples multiple responses per prompt and uses a leave-one-out baseline to further reduce variance. 

% \citet{hu2025reinforce++} continues this line, \hao{}.

\paragraph{GRPO and its variants.} GRPO \cite{shao2024deepseekmath} and related methods \cite{simoni2025gtpo, liu2025understanding, chu2025gpg, lin2025cppo} have gained widespread adoption due to their simplicity and scalability, and are now commonly used in post-training pipelines for reasoning-oriented models. Several extensions improve stability in long-CoT RL. DAPO \cite{yu2025dapo} introduces several techniques, including higher clipping thresholds and sequence-level losses, to make the training more stable. GSPO \cite{zheng2025group} replaces the sequence-level importance weight with sequence likelihood to avoid high-variance noise. CISPO \cite{chen2025minimax} further revises the clipping strategy to avoid low-probability tokens from being clipped out after the first on-policy update. %Large-scale systems work (e.g., DAPO and CISPO) has also consolidated GRPO-style training across diverse tasks and compute regimes \cite{yu2025dapo}.

\paragraph{Emerging theory for GRPO.} Recent studies analyze what GRPO optimizes and how it behaves in on- and off-policy regimes \cite{mroueh2025revisiting}, its implicit alignment objective \cite{vojnovic2025alignment}, and trajectory-corrected variants with convergence guarantees \cite{pang2025theory}. \citet{bereket2025uncalibrated} highlights a trade-off between normalization and calibration, showing that removing the std term can improve probability calibration at the cost of optimization speed. %\hao{\cite{fontana2026hidden}}.  
\citet{jaintowards} interprets GRPO's optimization as a weighted combination of maximization likelihood for correct rollouts and minimization for the incorrect ones, while noting that the use of std normalization lacks a clear theoretical motivation. We contribute a new perspective, interpreting the std term as an adaptive gradient mechanism reflecting local curvature, which helps unify a range of previously disparate empirical findings.

\paragraph{RLVR.} Reinforcement learning with verifiable rewards (RLVR) has emerged as an effective paradigm for reasoning-intensive domains. Unlike RLHF, which relies on a learned reward model, RLVR uses deterministic, verifiable rewards \cite{lambert2024tulu,guo2025deepseek,team2025kimi,wang2025reinforcement}. This mitigates reward-model bias and hacking and simplifies training, while scaling effectively with compute and dataset size. Strong results have been reported on the well-recognized GSM8K \cite{cobbe2021gsm8k}, MATH \cite{hendrycks2021measuring}, Omni-MATH \cite{gao2024omni}, and FormalMATH \cite{yu2025formalmath}. In this paper, we focus on developing a theoretical understanding of GRPO in the RLVR setting. %contributing to a sharp theoretical analysis of normalization and its role in adaptive gradient updates. 

%\paragraph{Adaptive optimization.}

\section{Preliminaries and Problem Settings}
We begin by introducing the RLVR framework for LLM training and reviewing the GRPO algorithm, including the policy parametrization and update rules.
\paragraph{Notation.}
For a finite set $\mathcal{X}$, we denote by $\Delta(\mathcal{X})$  the set of probability distributions over $\mathcal{X}$. By default, all vectors are treated as column vectors.  $\Vert \cdot \Vert = \Vert \cdot \Vert_2$ denotes the Euclidean norm for vectors and the spectral norm for matrices. For $v \in \mathbb{R}^m$, we use $\diag(v) \in \mathbb{R}^{m \times m}$ to denote the diagonal matrix with $v$ on its diagonal. We also use the shorthand notation $[m] \coloneqq \{1, \dots, m\}$ and $\mathcal{B}({v}, r) \coloneqq \{{x} \in \mathbb{R}^m \mid \Vert {x} - {v} \Vert_2 \le r\}$.

We say that a continuously differentiable function $f : \mathbb{R}^m \rightarrow \mathbb{R}^n$ is $C$-Lipschitz continuous if $\Vert \nabla f({x})\Vert \le C$ for all ${x} \in \mathbb{R}^m$. A function $f: \mathbb{R}^n \rightarrow \mathbb{R}$ is said to be $\rho$-weakly convex over $\mathcal{B}({v}, r)$ if $f + \frac{\rho}{2} \Vert \cdot \Vert^2$ is convex over $\mathcal{B}({v}, r)$. Moreover, $f$ is $L$-smooth over $\mathcal{B}({v}, r)$ if for all ${x_1},{x_2} \in \mathcal{B}({v}, r)$,  $\Vert \nabla f({x}_1) - \nabla f({x}_2) \Vert \le L \Vert {x}_1 - {x}_2\Vert$. An $L$-smooth function over $\mathcal{B}({v}, r)$ is automatically $L$-weakly convex on the same set. Throughout the paper, we do not distinguish between a Lipschitz-smooth function and a function with Lipschitz-continuous gradients, nor do we differentiate among the smoothness constant, the Lipschitz constant of the gradient, and the curvature. Finally, we use the terms question (answer) and prompt (output) interchangeably.

\subsection{Problem Setup}
\paragraph{RLVR.} We consider a sequence-level RL with verifiable reward (RLVR) framework for  training LLMs. RLVR \cite{lambert2024tulu, guo2025deepseek, team2025kimi, wang2025reinforcement} has recently emerged as an effective paradigm for improving LLM reasoning performance, particularly in domains with deterministic and automatically verifiable rewards. Let $\mathcal{Q} = \{q_1, \dots, q_n\}$ denote the set of questions and $\mathcal{O} = \{o_1, \dots, o_K\}$  the set of possible output sequences. A predefined deterministic reward function $r: \mathcal{Q} \times \mathcal{O} \rightarrow \{0, 1\}$ evaluates the correctness of a response, where $r(q, o)=1$ if the $o$ is a correct response to $q$, and $r(q, o)=0$ otherwise. Given a question $q$, the LLM generates a response $o\sim\pi_{\theta}(q)$ according to a stochastic policy $\pi_{\theta}: \mathcal{Q} \rightarrow \Delta (\mathcal{O})$, parameterized by $\theta \in \Theta$. The objective is to learn a policy that maximizes the expected reward over the question set:
\begin{equation}
\label{eqt:obj}
    J(\theta) := \frac{1}{n} \sum_{i = 1}^n J_{i}(\theta) = \frac{1}{n} \sum_{i = 1}^n \mathbb{E}_{o \sim \pi_{\theta}(\cdot \mid q_i)}[r(o, q_i)],
\end{equation}
where $J_i(\theta) \coloneqq \mathbb{E}_{o \sim \pi_{\theta}(\cdot \mid q_i)}[r(o, q_i)]$ denotes the expect reward of policy $\pi_{\theta}$ on question $q_i$.
\begin{remark}
In LLM alignment settings such as RLHF, an additional KL penalty term is often introduced to mitigate over-optimization of the reward model:
\begin{equation*}
    J_{\KL}(\theta) := \frac{1}{n} \sum_{i = 1}^n \mathbb{E}_{o \sim \pi_{\theta}(\cdot \mid q_i)}[r(o, q_i)] - \beta \KL(\pi_{\theta} \Vert \pi_{\text{ref}}),
\end{equation*}
where $\beta \ge 0$ is the regularization parameter and $\pi_{\text{ref}}$ denotes a reference policy \cite{shao2024deepseekmath}. In contrast, for RLVR, recent studies \cite{chu2025gpg, hu2025open} have shown that removing the KL term can yield faster convergence and improved performance. Consequently, throughout this paper we focus on the unregularized objective, where $J_{\KL}(\theta)$ reduces to $J(\theta)$ in Equation \ref{eqt:obj} for $\beta=0$. 
\end{remark}
In this paper, we consider an on-policy setting with \textit{exact} gradient and randomly selected questions. At each iteration, a question $q_i$ is sampled \textit{uniformly} from $\mathcal{Q}$, and the corresponding gradient $\nabla J_i(\theta)$ is evaluated exactly.

%Before introducing GRPO, we briefly review the classical

\paragraph{PPO.} PPO \cite{schulman2017proximal} has become a standard baseline for LLM alignment due to its stability and empirical effectiveness \cite{ouyang2022training, rafailov2023direct}. By combining policy gradient updates with a learned value-function critic and a clipped surrogate objective, PPO mitigates large policy updates and improves training robustness. Its objective function of PPO is given by:
\begin{equation}
\begin{aligned}
    \label{equ:ppo}
    J_{\textrm{PPO}}(\theta) &= \frac{1}{n} \sum_{i = 1}^n \mathbb{E}_{o \sim \pi_{\theta_{\textrm{old}}}(\cdot \mid q_i)}\Big[\min\Big(\gamma_i(o) A_i(o),  \\
    &\textrm{clip}(\gamma_i(o), 1-\epsilon, 1+\epsilon)A_i(o)\Big)\Big],
\end{aligned}
\end{equation}
where $\epsilon \in(0,1)$ is the clipping parameter, and $\gamma_i(o) \coloneqq \frac{\pi_{\theta}(o \mid q_i)}{\pi_{\theta_{\textrm{old}}}(o \mid q_i)}$ is the importance ratio between the current policy $\pi_{\theta}$ and the old policy $\pi_{\theta_{\textrm{old}}}$. The advantage $A_i(o)$ is calculated using Generalized Advantage Estimation with a learned critic. However, the reliance on a critic introduces additional computational overhead and implementation complexity. These considerations have motivated growing interest in critic-free alternatives such as REINFORCE.

\paragraph{REINFORCE.} Unlike PPO, REINFORCE \cite{williams1992simple} appears well-suited for LLM alignment as it efficiently estimates the gradient with a single query of the language and reward model. It does not require training a value model, making it computationally more efficient than PPO. However, REINFORCE is known to suffer from high variance in its stochastic gradient estimates \cite{li2023remax}. In this paper, we focus on the \textit{exact} setting where the full gradient is computed for randomly selected questions. Under this setting, all critic-free policy gradient methods, including REINFORCE, reduce to the vanilla policy gradient update, as summarized in \Cref{alg:reinforce}.
\begin{algorithm}[H]
    \caption{Critic-free Policy Gradient Method}
    \label{alg:reinforce}
    \begin{algorithmic}
        \STATE \textbf{Input:} learning rate $\eta > 0$, initial parameter $\theta_0$.
        \FOR{$t = 1$ to $T$}
            \STATE Select $i_t$ uniformly at random from $\{1, \dots, n\}$
            \STATE $\theta_{t} \leftarrow \theta_{t-1}+ \eta \nabla J_{i_t}(\theta_{t-1})$
        \ENDFOR
        \STATE \textbf{Return:} final policy $\pi_{\theta_{T-1}}$
    \end{algorithmic}
\end{algorithm}

\subsection{GRPO}
GRPO, introduced in DeepSeek-Math \cite{shao2024deepseekmath} and DeepSeek-R1 \cite{guo2025deepseek}, builds on the computational efficiency of REINFORCE by eliminating the learned value-function critic while substantially improving empirical performance. The GRPO objective closely resembles the PPO objective in Equation~(\ref{equ:ppo}):
\begin{equation}
\begin{aligned}
    \frac{1}{n} \sum_{i = 1}^n \mathbb{E}_{o \sim \pi_{\theta_{\textrm{old}}}(\cdot \mid q_i)}\Big[\min\Big(\gamma_i(o) A_i(o), \\
    \textrm{clip}(\gamma_i(o), 1-\epsilon, 1+\epsilon)A_i(o)\Big)\Big],
\end{aligned}
\end{equation}
differing only in  the advantage term $A_i(o)$. Specifically, GRPO defines the advantage as:
\begin{equation*}
    A_i(o) \coloneqq \frac{r(q_i, o)-\mathbb{E}_{o^{\prime} \sim \pi_{\theta_{\textrm{old}}}(\cdot \mid q_i)} \brk{r\left(q_i, o^{\prime}\right)}}{\sqrt{\bV_{o^{\prime} \sim \pi_{\theta_{\textrm{old}}}(\cdot \mid q)} \brk{r\left(q_i, o^{\prime}\right)} }},
\end{equation*} 
which uses the group mean reward as a baseline, and normalizes each update by the within-prompt reward standard deviation.  For theoretical analysis, we impose the assumption that each question admits a unique correct answer, a condition commonly adopted in the study of RL algorithms \cite{mei2020global, mei2023stochastic, lin2025rethinking}.
\begin{assumption}[Unique correct answer]
    \label{ass:ass_1}
    For any $q \in \Qset$, there exists a unique $o^*(q) \in \Oset$ such that $r(q,o^*(q)) = 1$.
\end{assumption}
Under Assumption~\ref{ass:ass_1}, let $a_i$ denote the index of the correct answer for question $q_i \in \Qset$, so that $r(q_i, o_{a_i})=1$.
\begin{comment}
\begin{equation}
    r(q_i, o_j) = \begin{cases}
        1, & \text{if} \quad j = a_i \\
        0, & \text{if} \quad j \neq a_i,
    \end{cases}
\end{equation}
\end{comment}
We use ${r}_i \in \mathbb{R}^K$ to denote the \textit{one-hot} reward vector for question $q_i$, with $[{r}_i]_j = r(q_i, o_j), \forall j \in [K]$. We consider the on-policy scenario where $\pi_{\theta} = \pi_{\theta_{\textrm{old}}}$. Let $\pi_{\theta}^*(i) \coloneqq \pi_{\theta}(o_{a_i}\mid q_i)$ be the success probability of policy $\pi_{\theta}$ on question $q_i$, and let $\mathbb{V}[\pi_{\theta}(i)]=\pi_{\theta}^*(i) \Big(1 - \pi_{\theta}^*(i)\Big)$ denote  its Bernoulli reward variance. Under this setting, the importance ratio  $\gamma_i(o)=1$, and the GRPO advantage simplifies to
\begin{equation}
    A_i(o) = \frac{r(q_i, o) - \sg(\pi_{\theta}^*(i))}{\sg(\sqrt{\mathbb{V}[\pi_{\theta}(i)]})},
\end{equation}
where $\sg(\cdot)$ denotes the stop-gradient operator, which treats its argument as a constant during backpropagation. The GRPO objective can be simplified as:
\begin{align*}
    J_{\textrm{GRPO}}(\theta) &= \frac{1}{n} \sum_{i = 1}^n J^i_{\textrm{GRPO}}(\theta) := \frac{1}{n} \sum_{i = 1}^n \mathbb{E}_{o \sim \pi_{\theta}}\Big[A_i(o)\Big].
\end{align*}
We further denote by $\pi_{\theta}(i) \in \mathbb{R}^K$ the probability vector for $\pi_{\theta}$ on question $i$, with $[\pi_{\theta}(i)]_j \coloneqq \pi_{\theta}(o_j \mid q_i), \forall j \in [K]$. Since subtracting a constant baseline does not affect the gradient of the objective, we therefore refer to the resulting algorithm as \textit{(on-policy) GRPO}. Our key observation is that the variance normalization in  GRPO implicitly implements an adaptive step size. In particular, for all $i\in[n]$,
\begin{equation}
    \begin{aligned}
            \nabla J^i_{\textrm{GRPO}}(\theta) &= \mathbb{E}_{o \sim \pi_{\theta}}[A_i(o) \nabla \ln \pi_{\theta}(o \mid q_i)] %= \mathbb{E}_{o \sim \pi_{\theta}}\brk{\frac{r(q_i, o)}{\sqrt{\mathbb{V}[\pi_{\theta}(i)]}} \nabla \ln \pi_{\theta}(o \mid q_i)} 
            \\
            &= \frac{\mathbb{E}_{o \sim \pi_{\theta}}[r(q_i, o) \nabla \ln \pi_{\theta}(o \mid q_i)]}{\sqrt{\mathbb{V}[\pi_{\theta}(i)]}}\\
            &=\frac{\nabla J_i(\theta)}{\sqrt{\mathbb{V}[\pi_{\theta}(i)]}}.
    \end{aligned}
\end{equation}
The first and last equalities follow from the policy gradient theorem \cite{sutton1998reinforcement}. The second equality holds because subtracting a constant baseline does not affect the gradient calculation. %The third equality follows from the fact that $A_i(o)$ is treated as constant in the gradient propagation. 
The pseudo-code for GRPO  is provided in \Cref{alg:grpo}.

\paragraph{Policy parametrization.} We focus on the \emph{log-linear policy parametrization}, which has been widely studied in the analysis of policy gradient methods \cite{agarwal2021theory, yuan2022linear}. Specifically,  for each question-output pair $(q_i, o_j)$, we assume the existence of a fixed feature vector ${x}_{i, j} \in \mathbb{R}^d$, and define the policy as
\begin{equation}
    \pi_{\theta}(o_j \mid q_i) := \frac{\exp({x}_{i, j}^{\top}\theta)}{\sum_{l = 1}^{K} \exp({x}_{i, l}^{\top} \theta)}.
\end{equation}
We denote by $X_i \in \mathbb{R}^{K \times d}$ the feature matrix associated with question $q_i$: $X_i \coloneqq \begin{pmatrix}
        {x}_{i, 1},  \cdots , {x}_{i, K}
    \end{pmatrix}^{\top}.$
The explicit update rules for REINFORCE (\Cref{alg:reinforce}) and GRPO (\Cref{alg:grpo}) under this parametrization are provided in Appendix \ref{app:grad_upd}.
%For ease of notation, we simply drop $t$ from $i_t$ whenever it clear in context.
\vspace{-0.5ex}
\begin{algorithm}[H]
    \caption{On-policy GRPO}
    \label{alg:grpo}
    \begin{algorithmic}
        \STATE \textbf{Input:} learning rate $\eta > 0$, initial parameter $\theta_0$.
        \FOR{$t = 1$ to $T$}
            \STATE  Select $i_t$ uniformly at random from $\{1, \dots, n\}$
            \STATE $\theta_{t} \leftarrow \theta_{t-1}+ \eta \frac{\nabla J_{i_t}(\theta_{t-1})}{\sqrt{\pi_{\theta_{t-1}}^*(i_t) \big(1 - \pi_{\theta_{t-1}}^*(i_t)\big)}}$
        \ENDFOR
        \STATE \textbf{Return:} final policy $\pi_{\theta_{T-1}}$
    \end{algorithmic}
\end{algorithm}
\vspace{-1ex}

\section{Theoretical Results}
In this section, we present a convergence analysis of REINFORCE and GRPO. We show that GRPO can achieve provably faster convergence than REINFORCE under our assumptions. It is worth noting that PG with linear function approximation 
may fail to converge to the optimal policy in general. Accordingly, our analysis focuses on convergence rates toward stationary points.

\subsection{Local Smoothness of Objective Function}
\label{subsec:local_smoothness}
We begin by relating reward variance to the smoothness of of the per-prompt objective. Let $X_{\max} := \max_{i \in [n]} \Vert X_i \Vert$. 
\begin{lemma}
     \label{lemma:local_smooth}
    Under \Cref{ass:ass_1}, for all $i \in [n]$ and $\theta \in \mathbb{R}^d$,
    \begin{equation}
    \label{eqt:smooth_var}
        \Vert \nabla^2 J_i(\theta)\Vert \le %4 X_{\max}^2 \cdot \pi_{\theta}^*(i) \bigl(1 - \pi_{\theta}^*(i)\bigr)= 
         4 X_{\max}^2 \cdot \bV[\pi_{\theta}(i)],
    \end{equation}
    Moreover,
    \begin{equation}
        \Vert \nabla^2 J_i(\theta)\Vert \le X_{\max}^2,
    \end{equation}
    and hence $J_i(\theta)$ is $X_{\max}^2$-smooth on $\mathbb{R}^d$.
\end{lemma}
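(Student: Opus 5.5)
The plan is a direct computation of the Hessian under the log-linear parametrization, followed by a spectral-norm bound that isolates the factor $\pi_\theta^*(i)(1-\pi_\theta^*(i))$. Fix $i$ and abbreviate $\pi \coloneqq \pi_\theta(i)\in\mathbb{R}^K$, $p \coloneqq \pi_\theta^*(i)$, $X\coloneqq X_i$, and let $e_a$ denote the one-hot vector $r_i$ (by \Cref{ass:ass_1}). Then $J_i(\theta) = e_a^\top \pi = p$. Since $\pi = \mathrm{softmax}(X\theta)$, the softmax Jacobian is $\nabla_\theta \pi = (\diag(\pi) - \pi\pi^\top)X$.

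The first step is the gradient:
\begin{equation*}
\nabla J_i(\theta) = X^\top(\diag(\pi)-\pi\pi^\top)e_a = p\,X^\top(e_a - \pi).
\end{equation*}
The second step differentiates this once more with the product rule, using $\nabla p = p X^\top(e_a-\pi)$ and the Jacobian of $\pi$ above. This gives
\begin{equation*}
\nabla^2 J_i(\theta) = p\, X^\top\Big[(e_a-\pi)(e_a-\pi)^\top - \big(\diag(\pi)-\pi\pi^\top\big)\Big]X .
\end{equation*}
Consequently $\Vert\nabla^2 J_i(\theta)\Vert \le X_{\max}^2\, p\,\big(\Vert e_a-\pi\Vert^2 + \Vert \diag(\pi)-\pi\pi^\top\Vert\big)$.

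The main step is to show that both bracketed terms are $O(1-p)$, so that the factor $p$ combines into $p(1-p)$.
\begin{itemize}
\item For the first term, $\Vert e_a-\pi\Vert^2 = (1-p)^2 + \sum_{j\neq a}\pi_j^2 \le (1-p)^2 + \big(\sum_{j\ne a}\pi_j\big)^2 = 2(1-p)^2 \le 2(1-p)$.
\item For the second term, the matrix $\diag(\pi)-\pi\pi^\top$ is symmetric, so its spectral norm is at most its maximum absolute row sum (Gershgorin). Row $j$ sums in absolute value to $\pi_j(1-\pi_j) + \pi_j\sum_{l\ne j}\pi_l = 2\pi_j(1-\pi_j)$.
\begin{itemize}
\item For $j=a$ this equals $2p(1-p)\le 2(1-p)$.
\item For $j\neq a$ it is at most $2\pi_j \le 2\sum_{l\neq a}\pi_l = 2(1-p)$.
\end{itemize}
\end{itemize}
Summing the two bounds gives $\Vert\nabla^2 J_i(\theta)\Vert \le 4X_{\max}^2\, p(1-p) = 4X_{\max}^2\,\bV[\pi_\theta(i)]$, which is \eqref{eqt:smooth_var}. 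This step is the delicate one, because a careless bound such as $\Vert\diag(\pi)-\pi\pi^\top\Vert\le 1$ loses the $(1-p)$ factor. The key observation is that every off-target mass $\pi_j$, $j\neq a$, is dominated by $1-p$.

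The remaining claims follow quickly. Since $p(1-p)\le 1/4$, we get $\Vert\nabla^2 J_i(\theta)\Vert\le X_{\max}^2$ for all $\theta$. Global $X_{\max}^2$-smoothness then follows from the fundamental theorem of calculus, $\nabla J_i(\theta_1)-\nabla J_i(\theta_2) = \int_0^1 \nabla^2 J_i(\theta_2 + s(\theta_1-\theta_2))(\theta_1-\theta_2)\,ds$, together with the uniform Hessian bound.
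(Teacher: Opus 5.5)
Your proof is correct, but it organizes the computation differently from the paper. The paper does not derive the Hessian itself. It imports from Lemma 17 of \citet{lin2025rethinking} the quadratic-form identity $y^\top\nabla^2 J_i(\theta)y=(Hr_i)^\top(X_iy\odot X_iy)-2(Hr_i)^\top(X_iy)\,(\pi^\top X_iy)$, with $H=\diag(\pi)-\pi\pi^\top$. This identity is linear in the single vector $Hr_i=p(e_a-\pi)$, so the factor $p(1-p)$ appears immediately. The first term uses $\Vert Hr_i\Vert_\infty=p(1-p)$ together with $\Vert X_iy\odot X_iy\Vert_1=\Vert X_iy\Vert^2$. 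The second uses $\Vert Hr_i\Vert\le\sqrt2\,p(1-p)$ together with Cauchy--Schwarz and $\Vert\pi\Vert\le1$. The resulting constant is $1+2\sqrt2\le4$.

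You instead write the Hessian in closed form as $p\,X^\top[(e_a-\pi)(e_a-\pi)^\top-H]X$ and pull out $p$ first. You then have to recover the factor $1-p$ from the covariance matrix, which you do with the Gershgorin row-sum bound $\Vert H\Vert\le\max_j 2\pi_j(1-\pi_j)\le 2(1-p)$. The key fact that every off-target probability is at most $1-p$ is the same one the paper uses to bound $\Vert Hr_i\Vert$.

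What your route buys:
\begin{itemize}
\item It is self-contained, with no external lemma.
\item It makes explicit the step from the Hessian bound to a Lipschitz gradient, which the paper leaves implicit.
\item It shows the bracket is a difference of two positive semidefinite matrices. Replacing the triangle inequality by $\Vert A-B\Vert\le\max(\Vert A\Vert,\Vert B\Vert)$ for $A,B\succeq 0$ would improve your constant from $4$ to $2$.
\end{itemize}
Nothing is lost relative to the paper's quadratic-form version, which is reused in Appendix~\ref{app:thm_2} along cross-prompt directions. With $z=X_iy$, your formula gives the same intermediate estimate $|y^\top\nabla^2 J_i(\theta)y|\le 4p(1-p)\Vert X_iy\Vert^2$.
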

The proof is provided in Appendix \ref{appsub:lemma_1}. This lemma shows that the local smoothness constant of $J_i(\theta)$ is directly controlled by the reward variance of $q_i$ under the current policy $\pi_\theta$. 
\begin{comment}
\begin{corollary}
    \label{cor:global_smooth}
    Under \Cref{ass:ass_1}, for all $i \in [n]$ and $\theta \in \mathbb{R}^d$,
    \begin{equation}
        \Vert \nabla^2 J_i(\theta)\Vert \le X_{\max}^2,
    \end{equation}
    so that $J_i(\theta)$ is $X_{\max}^2$-smooth on $\mathbb{R}^d$.
\end{corollary}
\end{comment}
For deterministic gradient descent on an $L$-smooth function, a step size of $1/L$ is a standard choice \cite{garrigos2023handbook}. Lemma~\ref{lemma:local_smooth} therefore motivates adapting the step size to the local smoothness constant of each per-prompt objective $J_i(\theta)$, which varies across prompts. GRPO achieves exactly this by normalizing updates with reward variance, which implicitly implements an adaptive step size matched to the local curvature of each prompt.

\subsection{Orthogonal Representation}
\label{subsec:orthogonal_assumption}
To extend this intuition from per-prompt objectives $J_i$ to the averaged objective $J = \frac{1}{n}\sum_i J_i$, we need to control the interactions between different prompts. In particular, we must ensure that gradients  associated with prompts do not interfere destructively during optimization. Empirically, as will be shown in \Cref{sec:experiments_for_theory}, gradients associated with different prompts are nearly orthogonal. Motivated by this observation, we introduce the following assumption to facilitate theoretical analysis.
\begin{assumption}[Orthogonal representation]
    \label{ass:ass_2}
    For all $i,j \in [n]$ with $i \neq j$, we have $X_i X_j^{\top} = {0}$. 
\end{assumption}
% This assumption guarantees that the gradients associated with different prompts are orthogonal, \st{simplifying the analysis of convergence for both REINFORCE and GRPO}. \hao{do not use "simplifying" to motivate the assumption, use enabling xxx}

% \st{To show the convergence for GRPO} \hao{same as above, we do not make certain assumptions because we want/need to prove the convergence}, 
This assumption ensures that the gradients associated with different prompts are mutually orthogonal, allowing their effects on the optimization dynamics to be analyzed independently for both REINFORCE and GRPO. We further impose the following assumption on the bound of within-prompt Bernoulli variance at every step.
\begin{assumption}[Bounded variance]
\label{ass:ass_3}
For each $i \in [n]$, there exists a positive sequence $\{C_{i_t}\}_{t = 1}^{\infty}$ such that
\[
\sqrt{\bV[\pi_{\theta_t}(i)]} \le C_{i_t} \le \frac{1}{2}.
\]
\end{assumption}

\subsection{Convergence Result under the Orthogonal Representation Assumption}
Before presenting the convergence results for \Cref{alg:reinforce} and~\ref{alg:grpo}, we further characterize the smoothness properties of $J_i$ at each iteration.
\begin{lemma}
    \label{lemma:lipschitz}
    Under \Cref{ass:ass_1}, for all $i \in [n]$, $J_i(\theta)$ is $\frac{1}{2} X_{\max}$-Lipschitz over $\mathbb{R}^d$.
\end{lemma}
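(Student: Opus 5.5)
Under the log-linear parametrization, the policy gradient has an explicit form, and the plan is to bound its norm directly. With $r_i$ the one-hot reward vector and $p := \pi_\theta(i)$, we have $J_i(\theta) = r_i^\top p$. The Jacobian of the softmax composed with the linear map $\theta \mapsto X_i\theta$ is $(\diag(p) - pp^\top) X_i$. Hence
\begin{equation*}
\nabla J_i(\theta) = X_i^\top (\diag(p) - pp^\top) r_i = X_i^\top \bigl(p \odot (r_i - \pi^*_\theta(i)\mathbf{1})\bigr),
\end{equation*}
where $\odot$ is the entrywise product. Write $\pi^* := \pi_\theta^*(i)$. The vector $v := (\diag(p)-pp^\top) r_i$ has entry $\pi^*(1-\pi^*)$ at the correct index $a_i$, and entry $-p_j \pi^*$ at every other index $j \neq a_i$. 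This uses \Cref{ass:ass_1}, so that $r_i$ is exactly one-hot.

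By the definition of Lipschitz continuity adopted in the paper, the claim amounts to showing $\Vert \nabla J_i(\theta)\Vert \le \tfrac12 X_{\max}$ for all $\theta$. The first step is $\Vert \nabla J_i(\theta)\Vert \le \Vert X_i\Vert \,\Vert v\Vert \le X_{\max}\Vert v\Vert$. The second step bounds $\Vert v\Vert$:
\begin{equation*}
\Vert v\Vert^2 = (\pi^*)^2(1-\pi^*)^2 + (\pi^*)^2\sum_{j\neq a_i} p_j^2 .
\end{equation*}
Since $\sum_{j\ne a_i} p_j = 1-\pi^*$ and the $p_j$ are nonnegative, $\sum_{j\neq a_i}p_j^2 \le (1-\pi^*)^2$. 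This gives $\Vert v\Vert^2 \le 2(\pi^*)^2(1-\pi^*)^2$, that is, $\Vert v\Vert \le \sqrt2\,\pi^*(1-\pi^*) \le \sqrt2/4$.

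The main obstacle is that this yields the constant $\sqrt2/4 \approx 0.354$. That is below $1/2$, so the stated bound $\tfrac12 X_{\max}$ follows with room to spare. A cruder estimate would instead use $\Vert v\Vert \le \Vert v\Vert_1 = 2\pi^*(1-\pi^*) \le \tfrac12$, which lands exactly on the claimed constant. I would present that $\ell_1$ route as the main line, since it is one step and matches the statement, and remark that the $\ell_2$ computation gives a sharper constant. The only care needed is to confirm the sign structure of $v$ (one positive entry, the rest negative, summing to zero), so that $\Vert v\Vert_1 = 2\pi^*(1-\pi^*)$, and that $\pi^*(1-\pi^*) \le 1/4$.
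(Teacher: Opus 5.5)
Your proof is correct and is essentially the paper's argument. Both write out $\nabla J_i(\theta)=X_i^\top(\diag(p)-pp^\top)r_i$ explicitly and bound its norm by $2\Vert X_i\Vert\,\pi^*_\theta(i)(1-\pi^*_\theta(i))\le\tfrac12 X_{\max}$. The paper does this via the triangle inequality and $\Vert x_{i,j}\Vert\le\Vert X_i\Vert$; your chain $\Vert X_i^\top v\Vert\le\Vert X_i\Vert\,\Vert v\Vert_2\le\Vert X_i\Vert\,\Vert v\Vert_1$ reaches the same quantity. Your sharper estimate $\Vert v\Vert_2\le\sqrt2\,\pi^*_\theta(i)(1-\pi^*_\theta(i))$ is the one the paper itself derives in the proof of \Cref{lemma:local_smooth}, so it does improve the constant to $\tfrac{\sqrt2}{4}X_{\max}$.
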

\begin{lemma}[Non-uniform local smoothness]
    \label{lemma:step_smooth}
    Under \Cref{ass:ass_1}, for all $i \in [n]$ and $\theta \in \mathbb{R}^d$, we have $\Vert \nabla^2 J_{i}(\theta') \Vert \le \frac{5}{2} X_{\max}^2 \cdot \sqrt{\bV(\pi_{\theta}(i))}$ for every $\theta' \in \mathcal{B}(\theta, \frac{1}{X_{\max}} \cdot \sqrt{\bV(\pi_{\theta}(i))})$. Consequently, $J_i$ is $\frac{5}{2} X_{\max}^2 \cdot \sqrt{\bV(\pi_{\theta}(i))}$-smooth within this neighborhood.
\end{lemma}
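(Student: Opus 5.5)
Write $p(\theta) := \pi_\theta^*(i)$ and $v(\theta) := p(\theta)(1-p(\theta))$, so that $\sqrt{v(\theta)}$ is the quantity in the statement. The plan is to combine the pointwise Hessian bound of \Cref{lemma:local_smooth}, namely $\Vert \nabla^2 J_i(\theta')\Vert \le 4X_{\max}^2\, v(\theta')$, with a bound showing that $v$ cannot change much within the ball $\mathcal{B}(\theta, \sqrt{v(\theta)}/X_{\max})$. It is then enough to prove
\[
4\, v(\theta') \le \tfrac{5}{2}\sqrt{v(\theta)}
\quad\text{for all } \theta' \in \mathcal{B}\!\left(\theta, \tfrac{1}{X_{\max}}\sqrt{v(\theta)}\right).
\]
The final smoothness claim then follows in the standard way, by integrating the Hessian along segments; the ball is convex, so every segment between two of its points stays inside it.

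\textbf{Step 1: gradient of the variance.} Since $J_i(\theta) = p(\theta)$, we have $\nabla p = X_i^\top \diag(\pi_\theta(i))(r_i - \pi_\theta(i))$. Under \Cref{ass:ass_1} the vector $\diag(\pi)(r_i-\pi)$ has entry $p(1-p)$ at $a_i$ and entries $-\pi_j p$ elsewhere. Its Euclidean norm is therefore at most
\[
p(1-p) + p\,\Vert \pi_{-a_i}\Vert_2 \le p(1-p)+p(1-p) = 2v .
\]
Hence $\Vert\nabla p\Vert \le 2X_{\max} v$. A slightly sharper count gives $\sqrt{2}\,X_{\max} v$, and \Cref{lemma:lipschitz} is consistent with this since $v\le 1/4$.

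It is cleaner to work with $s := \sqrt{v}$. Because $|1-2p|\le 1$,
\[
\nabla s = \frac{(1-2p)\nabla p}{2\sqrt{v}}, \qquad \Vert \nabla s\Vert \le \frac{2X_{\max} v}{2\sqrt v} = X_{\max}\, s .
\]
So $\log s$ is $X_{\max}$-Lipschitz wherever $s>0$, and $s$ never vanishes for softmax policies.

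\textbf{Step 2: Gronwall along the segment.} Parametrize $\theta(\tau) = \theta + \tau(\theta'-\theta)$. Then
\[
\log s(\theta') \le \log s(\theta) + X_{\max}\Vert\theta'-\theta\Vert \le \log s(\theta) + s(\theta).
\]
Using $s(\theta)\le 1/2$, this gives $s(\theta') \le e^{1/2} s(\theta)$. That yields $4v(\theta') \le 4e\, v(\theta)$, which is not yet of the required form $c\sqrt{v(\theta)}$.

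To reach the target, use the other factor. We have $v(\theta') = s(\theta')^2 \le s(\theta')\cdot \tfrac12$, so
\[
4v(\theta') \le 2\, s(\theta') \le 2e^{1/2}\, s(\theta) \approx 3.3\, s(\theta).
\]
This is the right shape but the constant exceeds $5/2$.

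\textbf{Main obstacle: the constant.} This is where I expect the difficulty. To get $5/2$, I would first tighten $\Vert\nabla p\Vert$ to $\sqrt{2}X_{\max}v$ and keep the factor $|1-2p|$. Then
\[
\frac{d}{d\tau}\, s \le \frac{X_{\max}}{\sqrt2}\,|1-2p|\, s \cdot \Vert\theta'-\theta\Vert .
\]
I would combine this with a direct bound $s(\theta') \le s(\theta) + \sup\Vert\nabla s\Vert\cdot\Vert\theta'-\theta\Vert$, where $\sup\Vert\nabla s\Vert \le X_{\max}/(2\sqrt2)$ after maximizing $|1-2p|\sqrt{v}/\sqrt2$. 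This gives $s(\theta') \le s(\theta) + s(\theta)/(2\sqrt2)$. Then
\[
4v(\theta') \le 2s(\theta') \le 2\left(1+\tfrac{1}{2\sqrt2}\right)s(\theta) \approx 2.71\, s(\theta).
\]
This is still slightly above $5/2$. The remaining step is to replace the crude $4v' \le 2s'$ by $4v' = 4s'^2$ and split into cases. If $s(\theta)$ is small, the quadratic bound wins. If $s(\theta)$ is near $1/2$, then $s'\le 1/2$ gives $4v' \le 1 \le \tfrac52 s(\theta)$ whenever $s(\theta)\ge 2/5$. For $s(\theta)<2/5$, use $4s'^2 \le 4(1.36)^2 s^2 \le 7.4\cdot 0.4\, s$, which I would then sharpen with the exact constants until it fits under $5/2$.
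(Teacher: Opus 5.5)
Your strategy (control how far $s=\sqrt{v}$ can move inside the ball, then use $4v(\theta')=4s(\theta')^2\le 2s(\theta')$) is sound. As written, however, the proof stops short of the constant $\tfrac52$, which is the quantitative content of the lemma. The best bound you actually establish is $2\bigl(1+\tfrac{1}{2\sqrt2}\bigr)\approx 2.71$. The case split you sketch to repair it does not close either. With $s(\theta')\le 1.36\,s(\theta)$, the quadratic bound $4s(\theta')^2\le 7.4\,s(\theta)^2$ stays below $\tfrac52 s(\theta)$ only for $s(\theta)\lesssim 0.34$. The trivial bound $4v(\theta')\le 1$ needs $s(\theta)\ge \tfrac25$. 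So for $s(\theta)\in(0.34,0.4)$ neither case applies. ``Sharpen with the exact constants until it fits'' is precisely the missing step.

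The gap comes from an error in your maximization. You bounded $|1-2p|$ by $1$ and $\sqrt v$ by $\tfrac12$ separately, but they cannot both be large. Since $(1-2p)^2=1-4v$, one has $|1-2p|\sqrt v=\sqrt{v(1-4v)}\le\tfrac14$ (at $v=\tfrac18$). Hence $\|\nabla s\|\le \tfrac{X_{\max}}{\sqrt2}|1-2p|\,s\le\tfrac{X_{\max}}{4\sqrt2}$ everywhere, not $\tfrac{X_{\max}}{2\sqrt2}$. On the ball this gives $s(\theta')\le\bigl(1+\tfrac{1}{4\sqrt2}\bigr)s(\theta)$, and therefore
\[
\Vert\nabla^2J_i(\theta')\Vert\le 4X_{\max}^2s(\theta')^2\le 2X_{\max}^2s(\theta')\le\Bigl(2+\tfrac{1}{2\sqrt2}\Bigr)X_{\max}^2s(\theta)<\tfrac52X_{\max}^2s(\theta).
\]
This needs no case split and no Gronwall step. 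Even your cruder bound $\|\nabla p\|\le 2X_{\max}v$ gives $\|\nabla s\|\le X_{\max}/4$ and exactly $\tfrac52$.

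A minor point: $\nabla p=X_i^\top(\diag(\pi)-\pi\pi^\top)r_i$. Your formula $X_i^\top\diag(\pi)(r_i-\pi)$ has off-target entries $-\pi_j^2$ instead of $-\pi_jp$, although the entries you then use are the correct ones.

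Once repaired, your route differs from the paper's. The paper never differentiates $\sqrt v$. It uses only the global bound $\|\nabla J_i\|\le\tfrac12X_{\max}$ from \Cref{lemma:lipschitz} to get $|p(\theta')-p(\theta)|\le\tfrac12 s(\theta)$. It then maximizes $4l(1-l)/s(\theta)$ over that interval by a two-case calculus argument, according to whether $l=\tfrac12$ is reachable. The worst case, $p(\theta)=\tfrac1{10}$, gives exactly $\tfrac52$. Your version replaces that one-variable optimization with a Lipschitz bound on $\sqrt v$, and with the $\sqrt2$ refinement it yields a slightly better constant.
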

The proofs are provided in \Cref{appsub:lemma_2} and~\ref{appsub:lemma_3}. We now present our two main theorems, which establish the convergence guarantees for  REINFORCE and GRPO, respectively. Their detailed proofs are deferred in Appendix \ref{app:thm}.

\begin{theorem}[Convergence rate of REINFORCE]
    \label{thm:reinforce}
    Under \Cref{ass:ass_1}-\ref{ass:ass_2}, with step size $\eta = \tfrac{1}{X_{\max}^2}$, we have:
\[
\mathbb{E}[J_i(\theta_t)] - \mathbb{E}[J_i(\theta_{t-1})]
\;\le\; -\frac{1}{2nX_{\max}^2} \,\mathbb{E}[\|\nabla J_i(\theta_{t-1})\|^2].
\]
Moreover, 
\begin{align*}
    \sum_{t=0}^{T-1} \mathbb{E}[\|\nabla J_i(\theta_t)\|^2]
\;&\le\; 2n(1-\pi^*_{\theta_0}(i))\,X_{\max}^2, 
%\min_{t \in [T]} \mathbb{E}[\|\nabla J_i(\theta_t)\|^2] \;&\le\; \frac{2n(1-\pi^*_{\theta_0}(i))\,X_{\max}^2}{T}.
\end{align*}
\end{theorem}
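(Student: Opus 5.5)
We adopt the minimization reading that makes the stated descent form consistent. In this reading $J_i$ denotes the per-prompt loss $1-\pi^*_\theta(i)$, i.e.\ the failure probability. The update of \Cref{alg:reinforce} is then a gradient step $\theta_t=\theta_{t-1}-\eta\nabla J_{i_t}(\theta_{t-1})$ on this loss. This is the same iteration as ascent on the expected reward, since the two gradients differ only in sign and have equal norms. In particular, $J_i(\theta_0)=1-\pi^*_{\theta_0}(i)$, and $J_i\ge 0$ everywhere.

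The plan is to fix a prompt $i$ and condition on $\theta_{t-1}$, splitting on whether $i_t=i$.

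\textbf{Step 1 (other prompts do not move $J_i$).} Under the log-linear parametrization, $J_j$ depends on $\theta$ only through $X_j\theta$. Hence $\nabla J_j(\theta)=X_j^\top g_j(\theta)$ for some $g_j(\theta)\in\mathbb{R}^K$. If $i_t=j\neq i$, then
\[
X_i\theta_t=X_i\theta_{t-1}-\eta X_iX_j^\top g_j(\theta_{t-1})=X_i\theta_{t-1}
\]
by \Cref{ass:ass_2}. Therefore $\pi_{\theta_t}(i)=\pi_{\theta_{t-1}}(i)$, and $J_i(\theta_t)=J_i(\theta_{t-1})$ exactly.

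\textbf{Step 2 (descent on the selected prompt).} If $i_t=i$, we use the global bound $\|\nabla^2J_i\|\le X_{\max}^2$ from \Cref{lemma:local_smooth}; note that the Hessian of the loss is the negative of that of the reward, so the norm bound is unchanged. The standard descent lemma with $L=X_{\max}^2$ and $\eta=1/L$ then gives
\[
J_i(\theta_t)\le J_i(\theta_{t-1})-\eta\|\nabla J_i(\theta_{t-1})\|^2+\tfrac{L\eta^2}{2}\|\nabla J_i(\theta_{t-1})\|^2
= J_i(\theta_{t-1})-\tfrac{1}{2X_{\max}^2}\|\nabla J_i(\theta_{t-1})\|^2.
\]

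\textbf{Step 3 (expectation).} Since $i_t$ is uniform and independent of $\theta_{t-1}$, combining Steps 1 and 2 gives
\[
\mathbb{E}[J_i(\theta_t)\mid\theta_{t-1}]-J_i(\theta_{t-1})\le -\tfrac{1}{n}\cdot\tfrac{1}{2X_{\max}^2}\|\nabla J_i(\theta_{t-1})\|^2.
\]
Taking total expectation yields the first claim.

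\textbf{Step 4 (telescoping).} We sum the first claim over $t=1,\dots,T$ and multiply by $2nX_{\max}^2$, obtaining
\[
\sum_{t=0}^{T-1}\mathbb{E}\|\nabla J_i(\theta_t)\|^2\le 2nX_{\max}^2\bigl(J_i(\theta_0)-\mathbb{E}[J_i(\theta_T)]\bigr).
\]
Because $J_i(\theta_T)\ge 0$ and $J_i(\theta_0)=1-\pi^*_{\theta_0}(i)$, the right-hand side is at most $2n(1-\pi^*_{\theta_0}(i))X_{\max}^2$, which is the second claim.

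The main obstacle is Step 1. We must verify that $\nabla J_j$ lies in the row space of $X_j$, which follows from the chain rule through the softmax. We must then check that the orthogonality assumption $X_iX_j^\top=0$ freezes all logits of prompt $i$, not merely its gradient direction. Once this invariance holds, the rest is the routine smooth descent argument.
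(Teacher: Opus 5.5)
Your proof is correct and follows the same route as the paper's: $J_i$ is unchanged when another prompt is sampled (via $X_iX_j^\top=0$), the descent lemma with the global constant $X_{\max}^2$ and $\eta=1/X_{\max}^2$ handles the sampled prompt, uniform sampling gives the factor $1/n$, and telescoping uses $J_i\in[0,1]$. The paper works with the reward and proves the ascent form $\mathbb{E}[J_l(\theta_t)]-\mathbb{E}[J_l(\theta_{t-1})]\ge \frac{1}{2nX_{\max}^2}\|\nabla J_l(\theta_{t-1})\|^2$; your loss reading is exactly what makes the sign in the stated inequality consistent with this, and your logit-invariance argument for Step 1 is a more rigorous version of the paper's appeal to pointwise gradient orthogonality (\Cref{lemma:auxiliary_1}).
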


\begin{theorem}[Convergence rate of GRPO]
    \label{thm:grpo}
    Under \Cref{ass:ass_1}-\ref{ass:ass_3} with the step size $\eta = \frac{1}{2X_{\max}^2}$, the following holds:
    \[
    \mathbb{E}[J_i(\theta_t)] - \mathbb{E}[J_i(\theta_{t-1})]
    \;\le\; -\frac{3\mathbb{E}[\|\nabla J_i(\theta_{t-1})\|^2]}{16nX_{\max}^2\,C_{i_t}} ,
    \]
    Moreover, we have
    \begin{align*}
        & \sum_{t=0}^{T-1} \mathbb{E}[\|\nabla J_i(\theta_t)\|^2] \le 2n(1-\pi^*_{\theta_0}(i))X_{\max}^2 \cdot C(i, T), 
        %& \min_{t \in [T]} \mathbb{E}[\|\nabla J_i(\theta_t)\|^2] \le \frac{2n(1-\pi^*_{\theta_0}(i))X_{\max}^2}{T} \cdot C(i, T),
    \end{align*}
    where $C(i, T) \coloneqq \frac{8}{3T}\sum_{t=0}^{T-1} C_{i_t}$.
\end{theorem}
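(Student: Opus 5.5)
We prove the per-step descent inequality first and then telescope; the only new ingredient relative to \Cref{thm:reinforce} is using the local smoothness of \Cref{lemma:step_smooth} in place of the global bound of \Cref{lemma:local_smooth}. Fix a question index $i$ and condition on $\theta_{t-1}$. By \Cref{ass:ass_2}, $\nabla J_j(\theta)=X_j^\top(\cdots)$ lies in the row space of $X_j$, which is orthogonal to that of $X_i$ for $j\neq i$. Hence an update on prompt $i_t=j\neq i$ leaves $X_i\theta$, and therefore $J_i$, unchanged. So, with probability $1-1/n$, $J_i(\theta_t)=J_i(\theta_{t-1})$. With probability $1/n$ we have $i_t=i$, and the update is
\[
\theta_t=\theta_{t-1}+\eta\,\nabla J_i(\theta_{t-1})/\sigma,\qquad \sigma:=\sqrt{\bV[\pi_{\theta_{t-1}}(i)]}.
\]
(We read the objective as maximized, so the stated inequality concerns $-J_i$, or equivalently $1-J_i$; we keep the signs as in the paper's convention for \Cref{thm:reinforce}.)

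\textbf{Step 1: the step stays in the good ball.} By the proof of \Cref{lemma:lipschitz}, the gradient is $\nabla J_i(\theta)=X_i^\top(\diag(\pi)-\pi\pi^\top)r_i$, and its norm is bounded by $X_{\max}\|(\diag(\pi)-\pi\pi^\top)r_i\|$. This vector equals $\pi^*(e_{a_i}-\pi)$, whose norm is at most $\sqrt{2}\,\pi^*(1-\pi^*)\le \sqrt2\,\sigma^2$; the cruder bound $2\sigma^2$ also suffices. Thus
\[
\|\theta_t-\theta_{t-1}\|\le \eta\cdot 2X_{\max}\sigma^2/\sigma=\sigma/X_{\max}
\]
for $\eta=1/(2X_{\max}^2)$, so $\theta_t\in\mathcal{B}(\theta_{t-1},\sigma/X_{\max})$. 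This is exactly the neighborhood on which \Cref{lemma:step_smooth} gives the smoothness constant $L=\tfrac52X_{\max}^2\sigma$. I expect this norm bound to be the main obstacle: the constants must be tracked so that the radius matches and the final coefficient $3/16$ emerges.

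\textbf{Step 2: the descent lemma.} Applying the descent lemma on the ball with $g=\nabla J_i(\theta_{t-1})$ gives an improvement of at least
\[
\frac{\eta}{\sigma}\|g\|^2-\frac{L}{2}\frac{\eta^2}{\sigma^2}\|g\|^2=\frac{\|g\|^2}{\sigma X_{\max}^2}\Big(\tfrac12-\tfrac{5}{16}\Big)=\frac{3\|g\|^2}{16X_{\max}^2\sigma}.
\]
Using $\sigma\le C_{i_t}$ from \Cref{ass:ass_3}, this is at least $\tfrac{3\|g\|^2}{16X_{\max}^2C_{i_t}}$. Multiplying by the probability $1/n$ and taking total expectation yields the first claim.

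\textbf{Step 3: telescoping.} Rearranging gives
\[
\mathbb{E}\|\nabla J_i(\theta_{t-1})\|^2\le \tfrac{16n}{3}X_{\max}^2\,C_{i_t}\,\big(\mathbb{E}J_i(\theta_t)-\mathbb{E}J_i(\theta_{t-1})\big).
\]
To sum over $t$ we need to pull out the $C$'s. Bounding each $C_{i_t}$ by the maximum would lose the averaging, so we follow the paper's form and treat $C_{i_t}$ as deterministic weights. Summing then bounds the total progress $\sum_t(\mathbb{E}J_i(\theta_t)-\mathbb{E}J_i(\theta_{t-1}))\le 1-\pi^*_{\theta_0}(i)$, since $J_i\le1$ and all increments are nonnegative. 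Combining the weights with this total progress, e.g.\ via $\sum_t C_{i_t}\Delta_t\le(\sum_t C_{i_t})\max_t\Delta_t\le(\sum_t C_{i_t})(1-\pi^*_{\theta_0})$, gives $\tfrac{16n}{3}X_{\max}^2(1-\pi^*_{\theta_0}(i))\sum_t C_{i_t}$. Rewriting this as $2n(1-\pi^*_{\theta_0}(i))X_{\max}^2\cdot\tfrac{8}{3T}\sum_tC_{i_t}\cdot T$ matches the stated $C(i,T)$ up to the normalization by $T$ in the definition, which I would check against the intended min/average form.
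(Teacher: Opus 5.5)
\textbf{Steps 1--2.} These match the paper's proof: decoupling via \Cref{ass:ass_2}, showing the normalized step stays in $\mathcal{B}(\theta_{t-1},\sigma/X_{\max})$, and applying the descent inequality with the constant from \Cref{lemma:step_smooth} to get the coefficient $3/16$. Your bound via $\|X_i\|\,\|\pi^*(e_{a_i}-\pi)\|$ is a slightly tidier version of the paper's row-wise triangle inequality.

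\textbf{The gap is in Step 3.} The estimate $\sum_t C_{i_t}\Delta_t\le(\sum_t C_{i_t})\max_t\Delta_t$ discards a factor of order $T$. What you actually obtain is
\[
\sum_{t=0}^{T-1}\mathbb{E}\|\nabla J_i(\theta_t)\|^2\le 2n\bigl(1-\pi^*_{\theta_0}(i)\bigr)X_{\max}^2\cdot T\cdot C(i,T).
\]
Its right-hand side grows linearly in $T$, so it gives no convergence rate; even the crude bound $C_{i_t}\le\frac12$ would give a $T$-independent right-hand side.

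The missing idea is to keep the weights $1/C_{i_t}$ on the gradient side rather than multiplying through by $C_{i_t}$. Telescoping the descent inequality gives
\[
\sum_{t=0}^{T-1}\frac{\mathbb{E}\|\nabla J_i(\theta_t)\|^2}{C_{i_t}}\le B:=\frac{16n}{3}X_{\max}^2\bigl(1-\pi^*_{\theta_0}(i)\bigr).
\]
Combine the lower bound
\[
\sum_t \frac{\mathbb{E}\|\nabla J_i(\theta_t)\|^2}{C_{i_t}}\ge\min_t\mathbb{E}\|\nabla J_i(\theta_t)\|^2\cdot\sum_t \frac{1}{C_{i_t}}
\]
with the Cauchy--Schwarz (AM--HM) bound $\sum_t 1/C_{i_t}\ge T^2/\sum_t C_{i_t}$. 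This yields
\[
\min_{0\le t\le T-1}\mathbb{E}\|\nabla J_i(\theta_t)\|^2\le\frac{2n\bigl(1-\pi^*_{\theta_0}(i)\bigr)X_{\max}^2}{T}\cdot C(i,T),
\]
which is exactly what the paper's proof concludes.

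\textbf{On the target form.} Your instinct to check against a min form is right. The sum form as printed, with the uniform average $C(i,T)$ on the right, cannot be extracted from the telescoped inequality by any rearrangement. Set $a_t:=\mathbb{E}\|\nabla J_i(\theta_t)\|^2$, and suppose $a_t$ is nonzero only at one step where $C_{i_t}=\frac12$, with all other $C_{i_t}$ tiny. Then the constraint $\sum_t a_t/C_{i_t}\le B$ allows $\sum_t a_t=B/2$, whereas $\frac{B}{T}\sum_t C_{i_t}\approx\frac{B}{2T}$, so the sum form fails by a factor close to $T$. The paper's argument establishes only the min-form bound above, equivalently
\[
T\min_t\mathbb{E}\|\nabla J_i(\theta_t)\|^2\le 2n\bigl(1-\pi^*_{\theta_0}(i)\bigr)X_{\max}^2\,C(i,T).
\]
That is the version of the second claim you should aim for.
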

\begin{proof}[Proof sketch] The analysis relies on the following key steps.

\textbf{Step 1: Decoupling across prompts.} By the log-linear parametrization and Assumption~\ref{ass:ass_2}, updates induced by question $q_{i_t}$ lie in a subspace orthogonal to those of any $q_l$ with $l\neq i_t$. Hence, if $i_t\neq l$, then $J_l(\theta_{t+1})=J_l(\theta_t)$.

\textbf{Step 2: One-step improvement for the active prompt.}
When $i_t=l$, choosing an appropriate step size guarantees a sufficient
one-step improvement of $J_l$. For REINFORCE, this follows from the global smoothness bound in Lemma~\ref{lemma:local_smooth}. For GRPO, the variance-normalized update ensures that $\theta_{t+1} \in
\mathcal{B}\!\left(\theta_t, \frac{1}{X_{\max}}\sqrt{\mathbb{V}(\pi_{\theta_t}(l))}\right)$, where the non-uniform local smoothness in Lemma~\ref{lemma:step_smooth} applies.

Taking expectations and summing over iterations yields a telescoping inequality, leading to the stated  bounds
on the cumulative expected gradient norm.
\end{proof}
According to \Cref{ass:ass_3}, we use $\frac{1}{T} \sum_{i = 0}^{T-1} \sqrt{\bV[\pi_{\theta_t}(i)]}$
as an estimate of $\frac{1}{T}\sum_{t=0}^{T-1} C_{i_t}$ in \Cref{thm:grpo}. Comparing Theorems~\ref{thm:reinforce} and~\ref{thm:grpo}, we observe that GRPO achieves a strictly better average convergence bound than REINFORCE, provided that the following constant factor satisfies
\vspace{-1ex}
\begin{align*}
    C(T):=%\sum_{i=1}^n \sum_{j=0}^{T-1} \frac{8 \sqrt{\pi^*_{\theta_j}(i)\big(1 - \pi^*_{\theta_j}(i)\big)}}{3nT} \\
     \sum_{i = 1}^n  \frac{8 }{3nT}\sum_{j = 0}^{T-1}\sqrt{\bV(\pi_{\theta_j}(i))}<1.
\end{align*}
Here, $C(T)$ represents the average of the within-prompt Bernoulli standard deviation over both prompts $i$ and iterations. In practice, $C(T)$ is typically much smaller than $1$ when the question set contains heterogeneous prompts with varying levels of difficulty.  Intuitively, GRPO automatically assigns smaller effective step sizes to prompts with higher curvature, yielding faster convergence on average. Similar improvements can also be expected in settings where the curvature varies significantly across iterations.

% A more detailed discussion of the conditions under which $C(T) = o(1)$ is provided in Appendix \ref{sec:c-n-t}.

\subsection{Convergence under Relaxed Assumptions}
While the orthogonal representation assumption enables clean decoupling between prompts, empirical results shown in \Cref{sec:experiments_for_theory} suggest that gradients associated with different prompts are only \textit{approximately} orthogonal rather than exactly so. In realistic LLM training, prompt representations exhibit small but non-zero cosine similarities, indicating weak yet non-negligible interactions between per-prompt updates. To deal with this more realistic regime, we introduce a relaxed set of assumptions that control these cross-prompt interactions.
\begin{assumption}[Bounded cross-prompt interaction]
    \label{ass:ass_4}
    There exists a positive constant $M$ such that for all $i \neq j \in [n]$, 
    \[
        M \, \langle \nabla J_i(\theta_t), \nabla J_j(\theta_t) \rangle
        \;\ge\;
        \frac{\Vert X_i \nabla J_j(\theta_t)\Vert^2}{\Vert X_i\Vert^2}.
    \]
\end{assumption}
\Cref{ass:ass_4} ensures that cross-prompt gradient interactions are well controlled. The inner product between gradients is positive and upper-bounds the squared projection of $\nabla J_j(\theta_t)$ onto the representation subspace for prompt $i$. When representations are orthogonal, both sides of the inequality vanish, and the assumption holds trivially.

To prevent any single prompt from dominating the optimization dynamics, we impose a scale-regularity condition.
\begin{assumption}[Scale regularity]
    \label{ass:ass_5}
    There exists positive constants $R_1, R_2 \ge 1$ such that for all $i,j \in [n]$, we have
    \[
        \frac{\Vert \nabla J_i(\theta_t)\Vert}{\Vert \nabla J_j(\theta_t)\Vert} \le R_1,
        \qquad
        \frac{\sqrt{\bV(\pi_{\theta_t}(i))}}
             {\sqrt{\bV(\pi_{\theta_t}(j))}}
        \le R_2.
    \]
\end{assumption}

\Cref{ass:ass_5} bounds both the relative magnitudes of gradients and the relative reward variances across prompts, ensuring that the optimization dynamics remain balanced.

In the presence of weak cross-prompt interactions, we can still establish convergence guarantees for both GRPO and REINFORCE. The resulting bounds take a similar form to \Cref{thm:reinforce} and \Cref{thm:grpo}, but include additional scaling factors determined by constants $M$, $R_1$, and $R_2$ in the relaxed assumptions. Their proofs are deferred to Appendix \ref{app:thm_2}.

\begin{theorem}
    \label{thm:reinforce_2}
    Under \Cref{ass:ass_1} and \Cref{ass:ass_4}, with step size  $\eta =1 / \prt{\max(1, \frac{M}{2})\,X_{\max}^2}$,
    the following descent inequality holds:
    \[
        \mathbb{E}[J_i(\theta_t)] - \mathbb{E}[J_i(\theta_{t-1})]
        \;\le\;
      -\frac{\mathbb{E}\big[\|\nabla J_i(\theta_{t-1})\|^2\big]}{2n\,\max(1, \frac{M}{2})\,X_{\max}^2}\,
        .
    \]
    Consequently,
    \begin{align*}
        \sum_{t=0}^{T-1} \mathbb{E}\big[\|\nabla J_i(\theta_t)\|^2\big]
        &\;\le\; 
        2n\,\max(1, \frac{M}{2})\,(1-\pi^*_{\theta_0}(i))\,X_{\max}^2, 
        %\min_{t \in [T]} \mathbb{E}\big[\|\nabla J_i(\theta_t)\|^2\big]
        %&\;\le\; \frac{2n\,\max(1, \frac{M}{2})\,(1-\pi^*_{\theta_0}(i))\,X_{\max}^2}{T}.
    \end{align*}
\end{theorem}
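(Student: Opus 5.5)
Fix a prompt $i$ and write $g_l := \nabla J_l(\theta_{t-1})$. Since REINFORCE ascends $J$, the inequality in the statement is read as a descent inequality for the suboptimality $1-J_i$. That is, the plan is to show $\mathbb{E}[J_i(\theta_t)\mid\theta_{t-1}] - J_i(\theta_{t-1}) \ge \frac{1}{2n\max(1,M/2)X_{\max}^2}\|g_i\|^2$. The last step takes full expectations and telescopes. Conditioning on $\theta_{t-1}$, I split on the sampled index $i_t$.

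\textbf{Case $i_t=i$ (probability $1/n$).} The update is $\theta_t=\theta_{t-1}+\eta g_i$. By \Cref{lemma:local_smooth}, $J_i$ is $X_{\max}^2$-smooth, so the quadratic lower bound gives
\[
J_i(\theta_t)-J_i(\theta_{t-1}) \ge \eta\|g_i\|^2-\tfrac{X_{\max}^2}{2}\eta^2\|g_i\|^2 .
\]
With $\eta = 1/(\max(1,M/2)X_{\max}^2)$ we have $\eta X_{\max}^2\le 1$, so this is at least $\tfrac{\eta}{2}\|g_i\|^2$.

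\textbf{Case $i_t=j\neq i$.} The update is $\theta_t=\theta_{t-1}+\eta g_j$, and I want to show $J_i$ does not decrease. By Taylor's theorem,
\[
J_i(\theta_t)-J_i(\theta_{t-1}) = \eta\langle g_i,g_j\rangle + \tfrac{\eta^2}{2} g_j^\top\nabla^2 J_i(\theta')\,g_j
\]
for some $\theta'$ on the segment. The key structural fact, taken from the proof of \Cref{lemma:local_smooth}, is that under the log-linear parametrization $\nabla^2 J_i(\theta') = X_i^\top H_i(\theta') X_i$, where $H_i(\theta')$ is a $K\times K$ matrix built from $\pi_{\theta'}(i)$ and $r_i$ with $\|H_i(\theta')\|\le 1$ uniformly in $\theta'$. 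This uniform bound is exactly what produced the constant $X_{\max}^2$ in that lemma. Hence
\[
|g_j^\top\nabla^2J_i(\theta')g_j|\le \|X_i g_j\|^2 .
\]
The quantity $\|X_i g_j\|^2$ depends only on the fixed vector $g_j$ at $\theta_{t-1}$. Therefore \Cref{ass:ass_4} applies at iteration $t-1$ even though the Hessian is evaluated at $\theta'$:
\[
\|X_ig_j\|^2\le M\|X_i\|^2\langle g_i,g_j\rangle\le M X_{\max}^2\langle g_i,g_j\rangle .
\]
Substituting gives
\[
J_i(\theta_t)-J_i(\theta_{t-1})\ge \eta\langle g_i,g_j\rangle\bigl(1-\tfrac{\eta M X_{\max}^2}{2}\bigr)\ge 0 .
\]
Both factors are nonnegative: $\langle g_i,g_j\rangle\ge 0$ is implied by \Cref{ass:ass_4}, and the step size gives $\eta M X_{\max}^2/2\le 1$. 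This is precisely where the factor $\max(1,M/2)$ enters.

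\textbf{Averaging and telescoping.} Averaging over $i_t$, the conditional expected improvement of $J_i$ is at least $\frac{1}{n}\cdot\frac{\eta}{2}\|g_i\|^2 = \frac{\|g_i\|^2}{2n\max(1,M/2)X_{\max}^2}$. Taking total expectation gives the one-step inequality. Summing from $t=1$ to $T$ telescopes the left side to $\mathbb{E}[J_i(\theta_T)]-J_i(\theta_0)\le 1-\pi^*_{\theta_0}(i)$, using $J_i\le 1$ and $J_i(\theta_0)=\pi^*_{\theta_0}(i)$ under \Cref{ass:ass_1}. Rearranging yields the cumulative gradient bound.

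I expect the main obstacle to be the cross-prompt case. The Hessian bound must be stated in the factored form $\|X_i v\|^2$ rather than as the operator-norm bound $X_{\max}^2\|v\|^2$, because only the factored form lets \Cref{ass:ass_4} absorb the second-order term. If the proof of \Cref{lemma:local_smooth} does not directly expose $\|H_i\|\le 1$, I would recompute it. The Hessian of $\pi_\theta^*(i)=\mathrm{softmax}(X_i\theta)_{a_i}$ is $X_i^\top[\pi^*(e-\pi)(e-\pi)^\top - \pi^*(\diag(\pi)-\pi\pi^\top)]X_i$, where $e$ is the one-hot vector of the correct answer $a_i$ (so $e=r_i$). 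Each bracketed term has spectral norm at most $1$ when $\pi$ is a probability vector, so the same $X_{\max}^2$ constant carries over.
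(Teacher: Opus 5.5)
Your proposal is correct and follows the paper's proof essentially step for step. The paper uses the same case split on $i_t$ and the global $X_{\max}^2$-smoothness bound for the active prompt. For $i_t\neq i$ it uses a second-order Taylor expansion in which the factored Hessian bound $|y^\top\nabla^2 J_i(\theta')y|\le 4\pi^*_{\theta'}(i)\bigl(1-\pi^*_{\theta'}(i)\bigr)\|X_iy\|^2\le\|X_iy\|^2$ from the proof of \Cref{lemma:local_smooth} is absorbed using \Cref{ass:ass_4}. The paper lower-bounds $\langle g_i,g_j\rangle$ by $\|X_ig_j\|^2/(M\|X_i\|^2)$ rather than upper-bounding $\|X_ig_j\|^2$, but this is the same inequality. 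It then averages over $i_t$ and telescopes, and your ascent reading of the displayed inequality matches what the paper actually proves.

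The only caveat is in your fallback recomputation of the Hessian. The two bracketed matrices are both positive semidefinite, so their difference has spectral norm at most the larger of the two norms, hence at most $1$. The triangle inequality alone would give only $2$, which would not close the cross-prompt step with this step size once $M>1$.
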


\begin{theorem}
    \label{thm:grpo_2}
    Under \Cref{ass:ass_1} and \Cref{ass:ass_3}--\ref{ass:ass_5}, with step size $\eta = 1 / \prt{  2\,\max(R_1,\frac{5M}{8})R_2\,X_{\max}^2}$,
    the following descent inequality holds:
    \[
        \mathbb{E}[J_i(\theta_t)] - \mathbb{E}[J_i(\theta_{t-1})]
        \;\le\;
        -\frac{3\mathbb{E}\big[\|\nabla J_i(\theta_{t-1})\|^2\big]}{16n\,\max(R_1, \frac{5M}{8})R_2\,X_{\max}^2\,C_{i_t}}.
    \]
    Moreover,
    \begin{align*}
        \sum_{t=0}^{T-1} \mathbb{E}\big[\|\nabla J_i(\theta_t)\|^2\big]
        &\;\le\;
        2n\,\max(R_1, \frac{5M}{8})R_2 (1-\pi^*_{\theta_0}(i))\\
        &\quad \quad X_{\max}^2 \cdot C(i, T), 
        %\min_{t \in [T]} \mathbb{E}\big[\|\nabla J_i(\theta_t)\|^2\big]
        %&\;\le\; \frac{2n\,\max(R_1, \frac{5M}{8})R_2\,(1-\pi^*_{\theta_0}(i))}{T} \\ & \quad \quad X_{\max}^2\cdot C(i, T),
    \end{align*}
    where $C(i, T) \coloneqq \frac{8}{3T}\sum_{t=0}^{T-1} C_{i_t}$.
\end{theorem}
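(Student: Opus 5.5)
We prove the one-step descent inequality by conditioning on $\theta_{t-1}$ and on the sampled index $i_t$. We distinguish $i_t=i$, which happens with probability $1/n$, from $i_t=j\neq i$. Then we take total expectation and telescope. Write $\theta=\theta_{t-1}$, $\sigma_j=\sqrt{\bV[\pi_\theta(j)]}$, and $g_j=\nabla J_j(\theta)$. The GRPO update is $\theta_t=\theta+\eta g_{j}/\sigma_{j}$ for the sampled $j$. We use $J_i\le 1$ and the log-linear gradient $g_j=X_j^\top(\mathrm{diag}(\pi_\theta(j))-\pi_\theta(j)\pi_\theta(j)^\top)r_j$. Since we maximize $J$, we read the stated inequality as bounding the expected ascent of $J_i$ from below, equivalently the descent of $-J_i$, as in \Cref{thm:grpo}.

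\textbf{Step 1: the update stays in a local-smoothness ball.} By \Cref{lemma:lipschitz}, $\|g_j\|\le \tfrac12 X_{\max}$. A sharper bound also holds: $\|g_j\|\le X_{\max}\|(\mathrm{diag}(\pi)-\pi\pi^\top)r_j\|\le \sqrt2\,X_{\max}\,\sigma_j^2$. This is because that vector has entries $\pi^*(1-\pi^*)$ and $-\pi^*\pi_k$ for $k\neq a_j$, whose norm is at most $\sqrt2\,\pi^*(1-\pi^*)$. Hence $\|\theta_t-\theta\|\le \eta\sqrt2 X_{\max}\sigma_j$. With $\eta=1/(2\max(R_1,\tfrac{5M}{8})R_2X_{\max}^2)$ and $R_1,R_2\ge1$, this is at most $\sigma_j/X_{\max}$. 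We need more, namely $\theta_t\in\mathcal B(\theta,\sigma_i/X_{\max})$ for the fixed index $i$, so that \Cref{lemma:step_smooth} gives $J_i$ smoothness $L_i=\tfrac52X_{\max}^2\sigma_i$ along the segment. This is where $R_2$ enters: $\sigma_j\le R_2\sigma_i$, so $\|\theta_t-\theta\|\le \sigma_i/(\sqrt2\max(R_1,\tfrac{5M}{8})X_{\max})\le\sigma_i/X_{\max}$.

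\textbf{Step 2: the descent lemma for $J_i$ under both cases.} Local smoothness gives
\[
J_i(\theta_t)\ge J_i(\theta)+\tfrac{\eta}{\sigma_j}\langle g_i,g_j\rangle-\tfrac{L_i}{2}\tfrac{\eta^2}{\sigma_j^2}\|g_j\|^2 .
\]
\emph{Case $j=i$.} We get $\tfrac{\eta}{\sigma_i}\|g_i\|^2(1-\tfrac54X_{\max}^2\eta)$, and $1-\tfrac54X_{\max}^2\eta\ge 3/8$ by the choice of $\eta$. \emph{Case $j\neq i$.} We need the cross term to dominate the quadratic term. Here we do not use the global bound $\|g_j\|^2$. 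Instead we exploit the structure of the Hessian: $\nabla^2J_i=X_i^\top H_i X_i$ with $\|H_i\|\lesssim\sigma_i$, so the curvature term really involves $\|X_ig_j\|^2\le \|H_i\|\,\|X_i g_j\|^2$. By \Cref{ass:ass_4}, $\|X_ig_j\|^2\le M\|X_i\|^2\langle g_i,g_j\rangle$. The quadratic term is therefore at most $\tfrac{5}{4}M X_{\max}^2\sigma_i\tfrac{\eta^2}{\sigma_j^2}\langle g_i,g_j\rangle$. Using $\sigma_i\le R_2\sigma_j$ and $\eta\le 4/(5MR_2X_{\max}^2)$, this is at most half of $\tfrac{\eta}{\sigma_j}\langle g_i,g_j\rangle$. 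The $j\neq i$ contributions are thus nonnegative, and we simply drop them. $R_1$ appears only to keep the step-size requirement uniform across indices.

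\textbf{Step 3: expectation and telescoping.} Averaging over $i_t$, only the $j=i$ term survives, with weight $1/n$. Using $\sigma_i\le C_{i_t}$ from \Cref{ass:ass_3} and plugging in $\eta$, we obtain an improvement of at least $\tfrac{3\|g_i\|^2}{16n\max(R_1,\frac{5M}{8})R_2X_{\max}^2C_{i_t}}$. Summing over $t$, the total improvement of $J_i$ is bounded by $1-\pi^*_{\theta_0}(i)$. We then handle the $1/C_{i_t}$ weights as in \Cref{thm:grpo}: bound $\sum_t\|g_i\|^2$ by the weighted sum times $\max_t C_{i_t}$, or via the averaging definition of $C(i,T)$. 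This yields the stated bound with factor $C(i,T)=\tfrac{8}{3T}\sum_tC_{i_t}$.

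\textbf{Main obstacle.} The delicate step is the $j\neq i$ case. Controlling the curvature along the direction $g_j$ through $\|X_ig_j\|$ requires verifying that the Hessian of $J_i$ at points of the segment factors as $X_i^\top(\cdot)X_i$ with middle factor $O(\sigma_i)$. That is a refinement of \Cref{lemma:step_smooth}, which I would re-derive from its proof. I would first try to extract that bound directly from the softmax-Jacobian computation.
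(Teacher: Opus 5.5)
Your Steps 1--2 are essentially the paper's appendix argument, and your reading of the sign as ascent matches it. You use the same split on whether $i_t=i$, and \Cref{lemma:step_smooth} with the factor $1-\tfrac54X_{\max}^2\eta\ge\tfrac38$ for the active prompt. For $i_t\neq i$ you use a Taylor expansion whose Hessian term is controlled through $\|X_i g_j\|^2$, after which \Cref{ass:ass_4} and \Cref{ass:ass_5} make the cross-prompt change nonnegative. The ``main obstacle'' you flag is already available. \eqref{equ:final_bound} gives $|y^\top\nabla^2J_i(\theta')y|\le(2\sqrt2+1)\,\pi^*_{\theta'}(i)\bigl(1-\pi^*_{\theta'}(i)\bigr)\|X_iy\|^2$ before $\|X_iy\|$ is relaxed to $\|X_i\|\|y\|$. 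The case analysis in the proof of \Cref{lemma:step_smooth} then turns the prefactor into $\tfrac52\sigma_i$ on the ball. Your explicit check via $R_2$ that the cross step stays inside $\mathcal{B}(\theta,\sigma_i/X_{\max})$ is a point the paper leaves implicit. One small slip: since $\eta\le 4/(5MR_2X_{\max}^2)$, the quadratic term is bounded by the whole cross term, not half of it. Nonnegativity is unaffected.

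The genuine gap is Step 3. Set $a_t=\mathbb{E}\|\nabla J_i(\theta_t)\|^2$ and $\kappa=\max(R_1,\tfrac{5M}{8})R_2$. Telescoping yields only the weighted bound $\sum_t a_t/C_{i_t}\le\tfrac{16}{3}n\kappa X_{\max}^2\bigl(1-\pi^*_{\theta_0}(i)\bigr)$. Pulling out $\max_tC_{i_t}$ gives the sum bound with $\tfrac83\max_tC_{i_t}$ in place of $C(i,T)$, which is weaker than claimed. There is also no valid ``averaging'' inequality $\sum_ta_t\le\bigl(\tfrac1T\sum_tC_{i_t}\bigr)\sum_ta_t/C_{i_t}$; it fails whenever the large $a_t$ occur where $C_{i_t}$ is large.

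In fact the sum form with the averaged factor is false. Take $n=1$, $K=2$, $d=1$, $x_{1,1}=1$, $x_{1,2}=0$, $\theta_0=0$ and $R_1=R_2=M=1$, so $\eta=\tfrac12$. Write $\sigma_t=\sqrt{\bV[\pi_{\theta_t}(1)]}$. Then $\nabla J_1(\theta_t)=\sigma_t^2$ and the update is $\theta_t=\theta_{t-1}+\tfrac12\sigma_{t-1}$. Hence $\theta_t\uparrow\infty$ and $\sigma_t\to0$, because below any level $\Theta$ each step is at least half of the value of $\sigma$ at $\Theta$. With the admissible choice $C_{i_t}=\sigma_t$, the right-hand side $\tfrac{8}{3T}\sum_{t<T}\sigma_t$ tends to $0$, while the left-hand side is at least $\sigma_0^4=\tfrac1{16}$.

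The paper's appendix does not establish the sum form either. It applies Cauchy--Schwarz, $\sum_t1/C_{i_t}\ge T^2/\sum_tC_{i_t}$, to the weighted bound. It concludes $\min_{t<T}a_t\le\tfrac{2n\kappa(1-\pi^*_{\theta_0}(i))X_{\max}^2}{T}\,C(i,T)$. That min-form, or the sum form with $\max_tC_{i_t}$, is what your Step 3 can actually deliver.
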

\begin{proof}[Proof sketch]
    The proofs follow the same high-level structure as those of Theorems~\ref{thm:reinforce} and~\ref{thm:grpo}, but no longer rely on the exact orthogonality assumption in Assumption~\ref{ass:ass_2}. Instead, the analysis leverages the relaxed cross-prompt interaction conditions in Assumptions~\ref{ass:ass_4} and~\ref{ass:ass_5}.

    \paragraph{Step 1: Decoupling across prompts.} When the sampled prompt $i_t \neq l$, a second-order Taylor expansion of $J_l$ around $\theta_t$, together with Assumptions~\ref{ass:ass_4}-~\ref{ass:ass_5}, shows that the update induced by $q_{i_t}$ does not decrease $J_l$. That is, cross-prompt interference is non-negative and uniformly controlled
\[
J_l(\theta_{t+1}) \ge J_l(\theta_t), \qquad \text{for } i_t \neq l .
\]

    \paragraph{Step 2: One-step improvement for the active prompt.}
    When $i_t = l$, the argument proceeds analogously to the proofs of
Theorems~\ref{thm:reinforce} and~\ref{thm:grpo}.
Using the appropriate step size and the local smoothness bounds, one
obtains a sufficient one-step improvement of $J_l$.
Taking expectations over the random prompt selection and summing over
iterations yields the stated convergence results.
\end{proof}

\section{Empirical Studies}

\label{sec:emperical}

\subsection{Validation of the Orthogonal Assumption}
\label{sec:experiments_for_theory}
Assumption~\ref{ass:ass_2} plays a central role in our theoretical analysis. It posits that representations associated with different prompts are nearly orthogonal in the model's representation space. %Intuitively, this assumption suggests that learning signals from distinct questions largely decouple, thus updates for one prompt have negligible interference with others.
Formally, for two distinct questions $i \neq j$, we expect the cosine similarity between their representations
\vspace{-.5ex}
\begin{equation*}
\text{cos}(v_i, v_j):=\frac{\langle v_i, v_j \rangle}{\|v_i\|\|v_j\|} \approx 0,
\end{equation*}
%\vspace{-.5ex}
where $v_i$  denotes the representation vector of question $i$. In our experiments, we  take $v_i$ to be the penultimate-layer hidden state of the language model. %This assumption simplifies the analysis by ensuring cross-question interference is negligible.  

We validate Assumption~\ref{ass:ass_2} on GSM8K \cite{cobbe2021gsm8k} using \texttt{Qwen2.5-MATH-1.5B} \cite{yang2024qwen2}. Specifically, we randomly sample 1{,}000 pairs of distinct questions, extract their penultimate-layer hidden states, pool these representations into sentence-level embeddings, and compute the absolute cosine similarity for each pair.  As shown in Figure~\ref{fig:cosine_hist}, similarities are sharply concentrated near zero (mean $\approx 0.088$, std $\approx 0.064$). Furthermore, the inverse CDF in Figure~\ref{fig:cosine_cdf} indicates that over $90\%$ of pairs exhibit absolute cosine similarity below $0.15$, providing strong empirical support for the orthogonality assumption.

\begin{figure}[t]
    \centering
    % left
    \begin{subfigure}{0.49\linewidth}
        \centering
        \includegraphics[width=\linewidth]{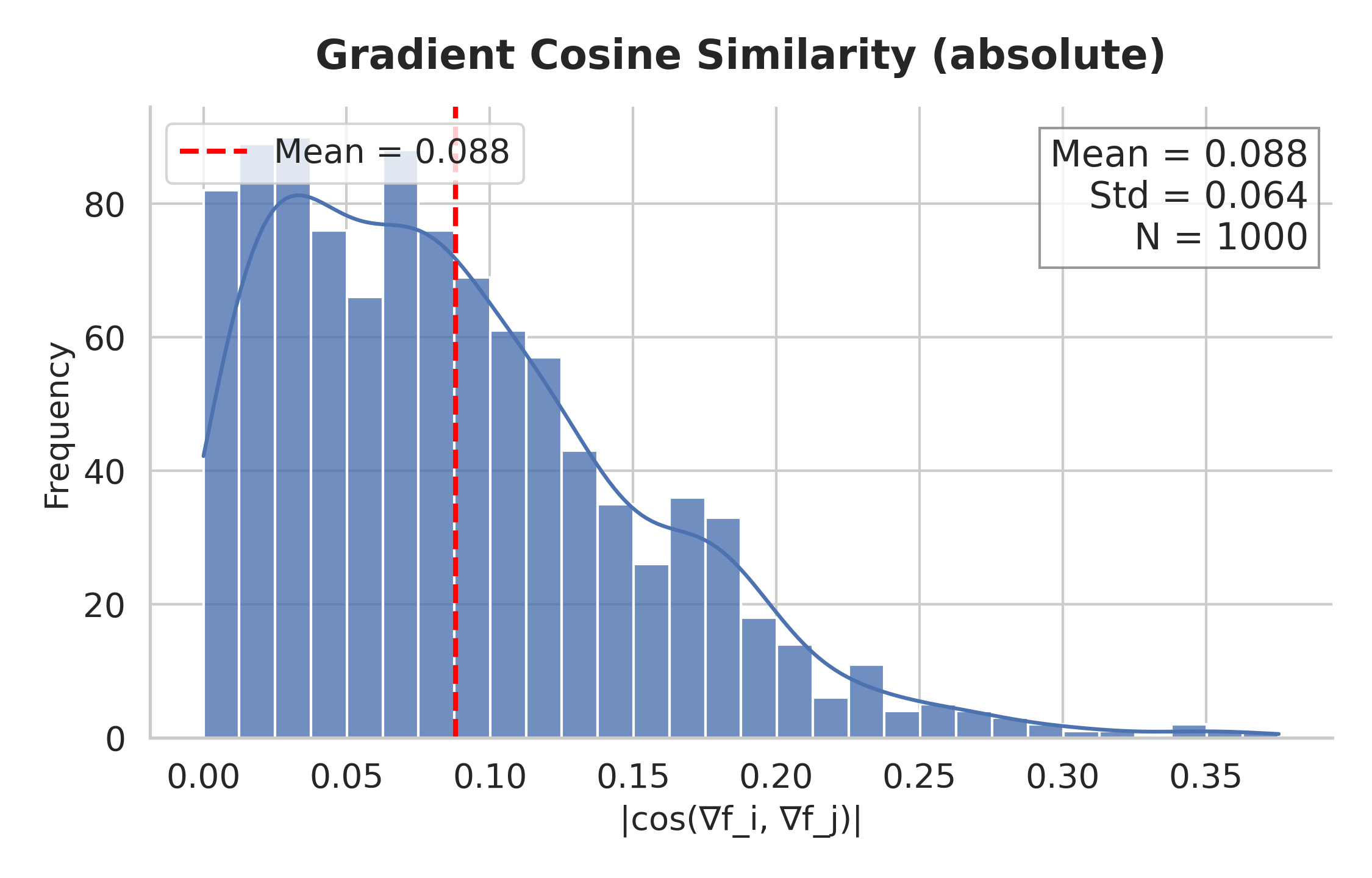}
        \caption{Histogram of absolute cosine similarity }
        \label{fig:cosine_hist}
    \end{subfigure}
    \hfill
    % right
    \begin{subfigure}{0.49\linewidth}
        \centering
        \includegraphics[width=\linewidth]{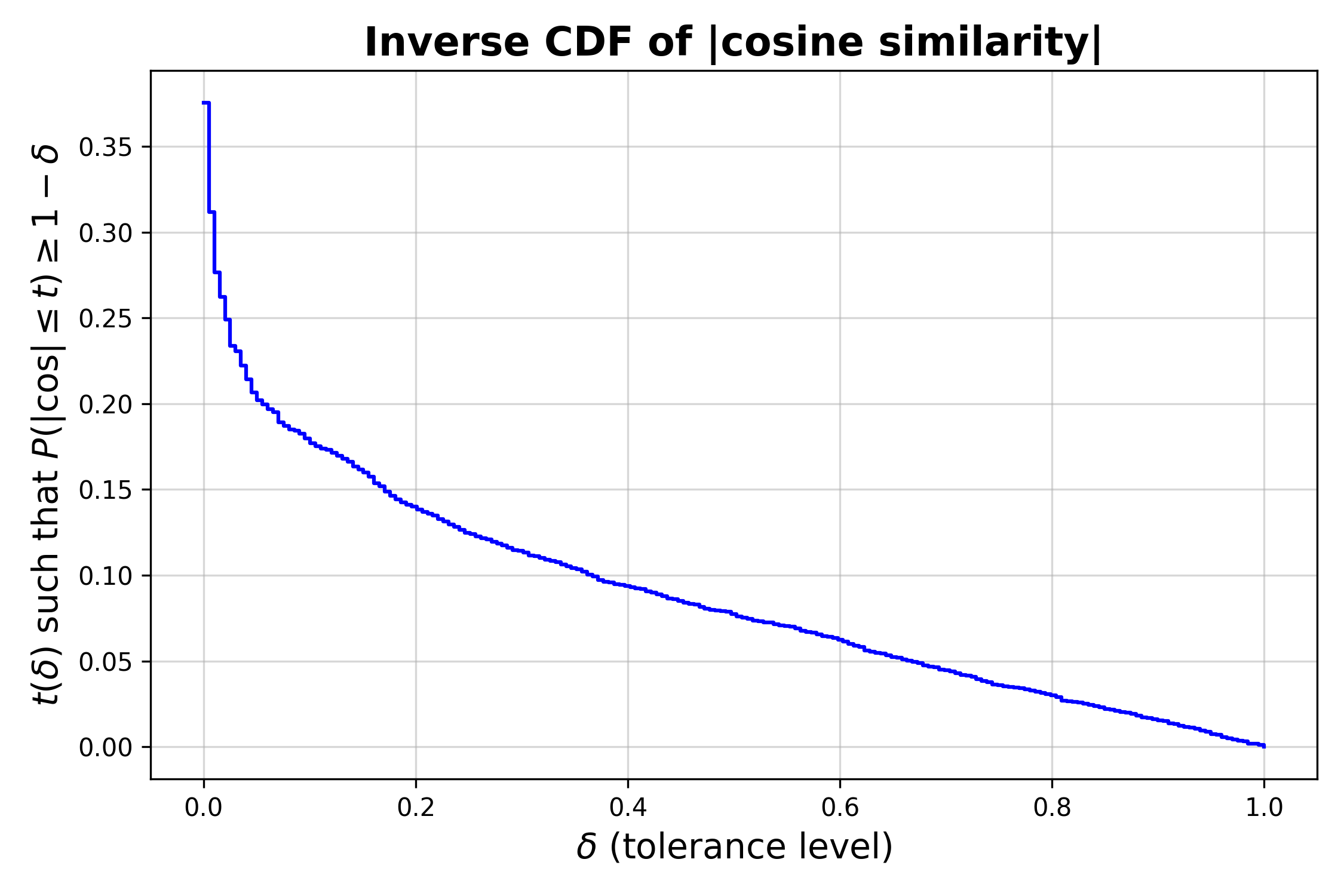}
        \caption{Inverse CDF of absolute cosine similarity}
        \label{fig:cosine_cdf}
    \end{subfigure}
    \caption{Empirical validation of Assumption \ref{ass:ass_2}. 
    %(a) Histogram of absolute cosine similarities between question pairs. 
    %(b) Inverse CDF showing tail behavior.
    }
    \label{fig:cosine_analysis}
    % \vspace{-3ex}
\end{figure}

\subsection{Validation of the Relaxed Assumption}
First of all, a central assumption in our relaxed theoretical framework (Assumption  \ref{ass:ass_4}) posits that cross-prompt gradient interactions remain bounded and constructive. Specifically, Assumption 4 requires the existence of a positive constant $M$ such that for all distinct prompts $i, j \in[n]$:  
\begin{equation*}
        M \, \langle \nabla J_i(\theta_t), \nabla J_j(\theta_t) \rangle
        \;\ge\;
        \frac{\Vert X_i \nabla J_j(\theta_t)\Vert^2}{\Vert X_i\Vert^2}.
\end{equation*}
Figure \ref{fig:ass4} shows the distribution of pairwise gradient cosine similarities. At initialization (Step 0), over 80\% of pairs satisfy $|cos| < 0.1$, indicating near-orthogonality where Assumption \ref{ass:ass_4} holds trivially. During training, the distribution shifts slightly positive while remaining bounded ($M < 10$), confirming the assumption holds throughout optimization.
\begin{figure}
    \centering
    \includegraphics[width=0.5\linewidth]{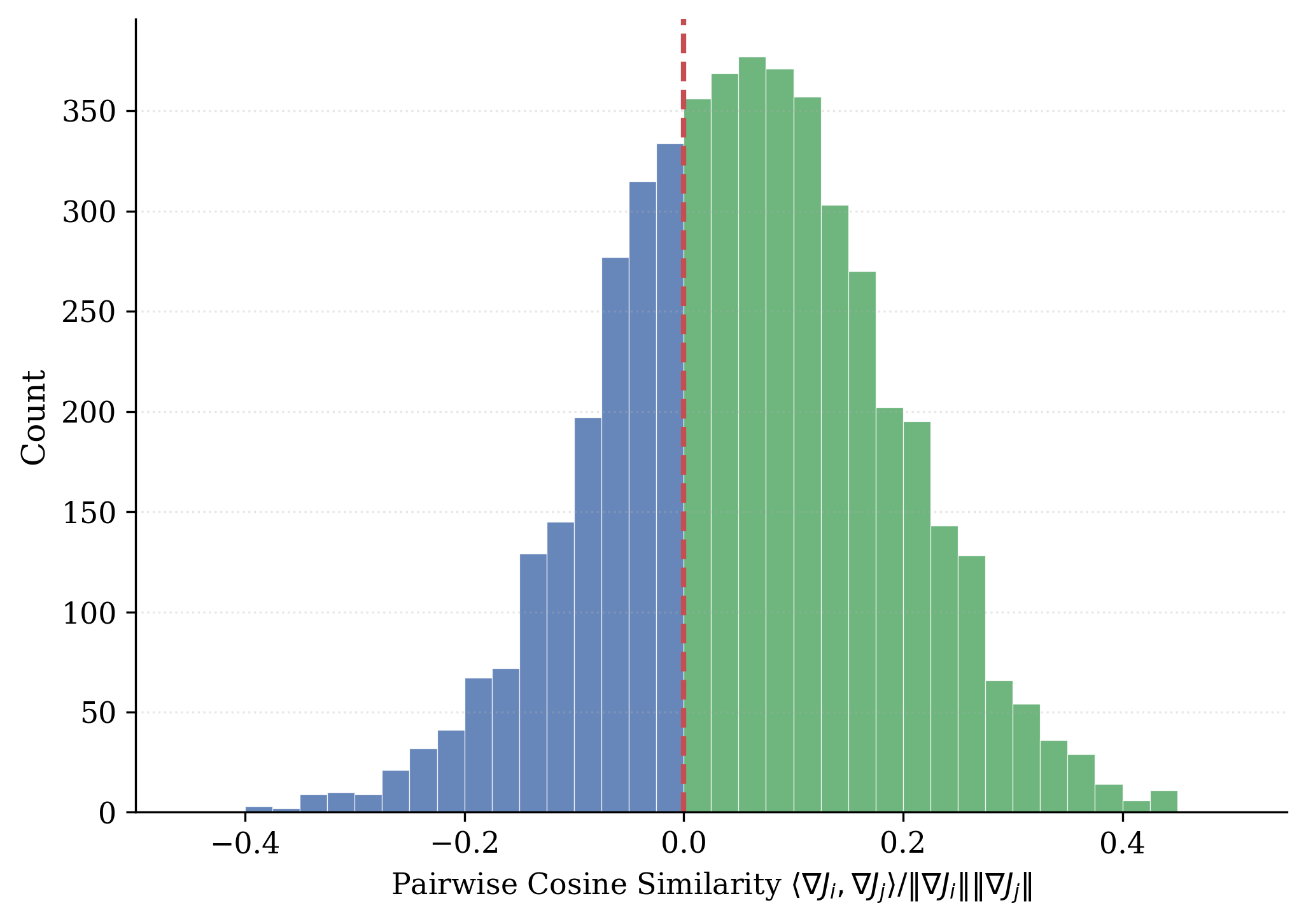}
    \caption{Empirical validation of Assumption \ref{ass:ass_4}}
    \label{fig:ass4}
\end{figure}
\subsection{Validation of Local Curvature-Variance Connection}
To empirically validate the local curvature-variance connection in Equation \ref{eqt:smooth_var}, we use the Fisher information matrix as a
proxy for local curvature, following \citet{liu2023sophia}. Given a
mini-batch of prompts $\{q_i\}_{i=1}^B$ at iteration $t$, we proceed as
follows: (i) we sample responses $\hat{o}_i \sim \pi_{\theta_t}(\cdot | q_i)$ for each prompt $q_i$; (ii) we compute the mini-batch policy gradient
$\nabla \hat{\mathcal{L}}_B(\theta_t) = \frac{1}{B} \sum_{i=1}^B \nabla \log \pi_{\theta_t}(\hat{o}_i | q_i)$; (iii) we estimate the diagonal Fisher Information using an efficient estimator ${h}(\theta_t):=\text{diag}(\hat{F}_{\text{eff}}(\theta_t)) := B \cdot \nabla \hat{\mathcal{L}}_B(\theta_t) \odot \nabla \hat{\mathcal{L}}_B(\theta_t)$, where $\odot$ denotes element-wise multiplication. This estimator is unbiased in the sense that $\mathbb{E}_{\hat{o}}[{h}(\theta_t)] = \mathbb{E}_{\hat{o}}[\text{diag}(\hat{F}(\theta))]$. The resulting Fisher information ${h}(\theta_t)$ serves as our curvature proxy, capturing the local smoothness of the loss landscape. 
%This aligns with our theoretical framework where higher Fisher Information (larger curvature) corresponds to regions requiring smaller step sizes, justifying GRPO's variance-based normalization strategy.
In Table \ref{tab:fisher_correlation}, prompt-level Fisher entries exhibit a significant correlation with reward variance at the same iteration (mean Pearson $\approx 0.34$, $p\!<\!0.01$), but show no meaningful correlation across different iterations. This suggests that the curvature–variance relationship holds locally rather than globally over training time.
\begingroup
\setlength{\intextsep}{12pt}  
\setlength{\textfloatsep}{12pt}
\begin{table}[h]
  \centering
  \captionsetup{skip=4pt} 

  \setlength{\tabcolsep}{6pt}   
  \renewcommand{\arraystretch}{1.1}
  % \label{tab:fisher_correlation}
\caption{Temporal dependence between Fisher information and reward variance.}
  \begin{tabular}{@{}lcc@{}}
    \toprule
    \textbf{Time lag} & \textbf{Mean correlation} & \textbf{Significant} \\
    \midrule
    Same iter.  & 0.342  & Yes ($p=$ 0.008) \\
    Different iter.  & -0.028 & No ($p=$ 0.18) \\
    \bottomrule
  \end{tabular}
  \label{tab:fisher_correlation}
  % \vspace{-2ex}
\end{table}
\endgroup

\subsection{Comparisons on LLM Reasoning Task}

We conduct empirical evaluations to validate the effectiveness of variance 
normalization in GRPO and REINFORCE across tasks of varying difficulty.

\textbf{Experimental setup.} We use the \texttt{Qwen2.5-Math-1.5B} model as our base model and apply Low-Rank Adaptation (LoRA) for parameter-efficient fine-tuning. To study the effect of task difficulty and generalization, we employ two distinct benchmarks. First, we partition the GSM8K training set according to solution complexity into two subsets: \emph{GSM8K-Easy} (4{,}695 examples) and \emph{GSM8K-Hard} (1{,}909 examples). This partitioning is performed using \texttt{Qwen2-7B-Instruct} as an external evaluator, enabling a controlled analysis of the normalization scheme across varying difficulty regimes. Second, to assess the robustness of our results on broader mathematical domains, we incorporate the \emph{MATH} benchmark (results see \ref{app:math_dataset}), which consists of competition-style problems spanning algebra, geometry, and number theory.

\textbf{Normalization strategies.} We compare  normalization schemes within GRPO:
\begin{itemize}[leftmargin=*]
  \item GRPO (\textbf{standard}): %($\mathcal{N}_{\text{standard}}$)
   $\hat{A}_{i,t}=\frac{r_i-\operatorname{mean}({r})}{\operatorname{std}({r})}$.
  \item No std normalization (\textbf{no\_std}): %($\mathcal{N}_{\text{no-std}}$)
   $\hat{A}_{i,t}=r_i-\operatorname{mean}({r}).$
\end{itemize}
%\item Batch-Std ($\mathcal{N}_{\text{batch}}$): per-question mean with a global batch std over all $N\times K$ samples $\hat{A}_{i}^{\text{batch}}=\frac{r_i-\operatorname{mean}\!\left(\{r_j\}_{j=1}^{NK}\right)}{\operatorname{std}\!\left(\{r_j\}_{j=1}^{NK}\right)}.$

\paragraph{Results and Discussion.}
We report the primary metric of sample accuracy, defined as the fraction of correct solutions among all generated responses. Figure~\ref{fig:norm} presents training accuracy on GSM8K Easy and Hard, with training phases identified via gradient orthogonality analysis (Section~\ref{sec:experiments_for_theory}).
The results reveal three distinct training regimes that closely align with our theoretical predictions.

\textbf{Phase I: Near-Orthogonal regime (iterations 0--100).}
In the early training phase, prompt representations remain nearly orthogonal,
as validated empirically in Section~\ref{sec:experiments_for_theory} 
(over 90\% of gradient pairs satisfy $|\cos| < 0.1$).
Despite this favorable geometric condition for Theorem~\ref{thm:grpo},
both normalization schemes exhibit comparable performance with overlapping trajectories. On GSM8K-Easy, both schemes improve from $\approx 0.45$ to $\approx 0.70$ with frequent crossings. On GSM8K-Hard, both follow similar trends, increasing from $\approx 0.20$ to $\approx 0.40$. This observation can be attributed to the high reward variance characteristic of this regime. When accuracy lies in the range of 20--50\%, 
the Bernoulli variance $\pi^*_\theta(i)(1-\pi^*_\theta(i))$ approaches its maximum, leading to highly noisy gradient estimates. Such noise obscures the benefits of curvature-adaptive scaling, thereby diminishing the observable advantage of normalization during this phase.

\textbf{Phase II: Low-variance transition (iterations 100--300).}
As training progresses, gradient orthogonality gradually degrades 
(Figure~\ref{fig:3phases} in Appendix \ref{app:additional_results}, middle panel), yet the variance of pairwise gradient similarities remains low (std $\approx 0.066$). This regime is particularly favorable for GRPO's normalization mechanism, under which \textit{standard} establishes a clear and steadily widening advantage over \textit{no\_std}.

On GSM8K-Easy, \textit{standard} reaches $\approx 0.95$ 
while \textit{no\_std} attains $\approx 0.92$, opening a gap of nearly 3 percentage points. On GSM8K-Hard, the separation is more pronounced: 
\textit{standard} climbs to $\approx 0.85$ 
while \textit{no\_std} reaches only $\approx 0.78$, 
yielding a $\sim$7 percentage point difference at iteration 300. The effectiveness of normalization in this phase can be explained as follows.
Although strict orthogonality (Assumption~\ref{ass:ass_2}) no longer holds,
the low variance in cross-prompt gradient interactions 
ensures that the interference remains bounded, allowing the curvature-adaptive step size induced by normalization to operate effectively 
under the relaxed conditions of Theorem~\ref{thm:grpo_2}.

\textbf{Phase III: High-variance regime (iterations 300--500).}
In the late stage, cross-prompt gradient similarities exhibit substantially increased variance 
(std rises to $\approx 0.130$, Figure~\ref{fig:3phases}, right panel),
indicating heterogeneous interference patterns across prompt pairs.
Critically, \textit{standard} maintains its advantage but the gap no longer widens. On GSM8K-Easy, both methods stabilize near $0.96$--$0.98$, 
with \textit{standard} having around 2 percentage point lead.
On GSM8K-Hard, both methods fluctuate around $0.80$--$0.88$, 
with \textit{standard} maintaining about 3 percentage point advantage. The increased variance in gradient similarities limits the benefit of adaptive scaling. However, \textit{standard} does not degrade relative to \textit{no\_std}. The advantage accumulated during Phase~II is preserved throughout this regime. This suggests that GRPO's normalization provides robust improvements overall, although its marginal gains diminish in the later stages of training as the model approaches saturation.

\textbf{Role of gradient similarity variance}.
This analysis suggests that GRPO's normalization is particularly valuable 
during the intermediate training phase when the Bernoulli reward variance 
$\pi^*_\theta(i)(1-\pi^*_\theta(i))$ exits the regime near 0.25 
and enters a moderate range.  In this regime, reward variance serves as a reliable proxy for local curvature
(Lemma~\ref{lemma:local_smooth}), while sufficient heterogeneity across prompts is preserved to enable effective per-prompt adaptive scaling.

\begin{figure}[t]
    \centering
    \begin{subfigure}[b]{0.48\linewidth}
        \centering
        \includegraphics[width=\linewidth]{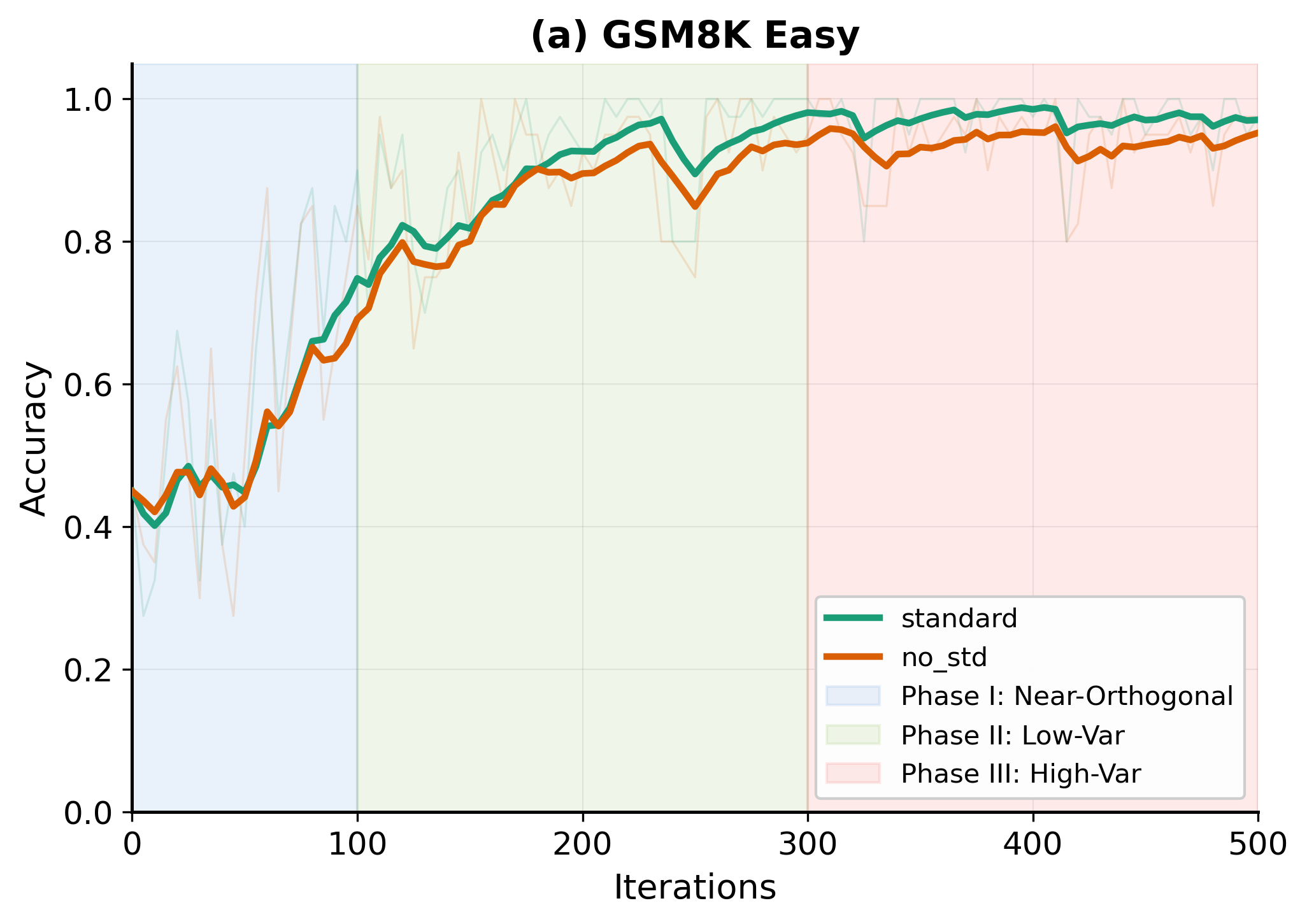}
        \caption{GSM8K Easy}
        \label{fig:norm-easy}
    \end{subfigure}
    \hfill
    \begin{subfigure}[b]{0.48\linewidth}
        \centering
        \includegraphics[width=\linewidth]{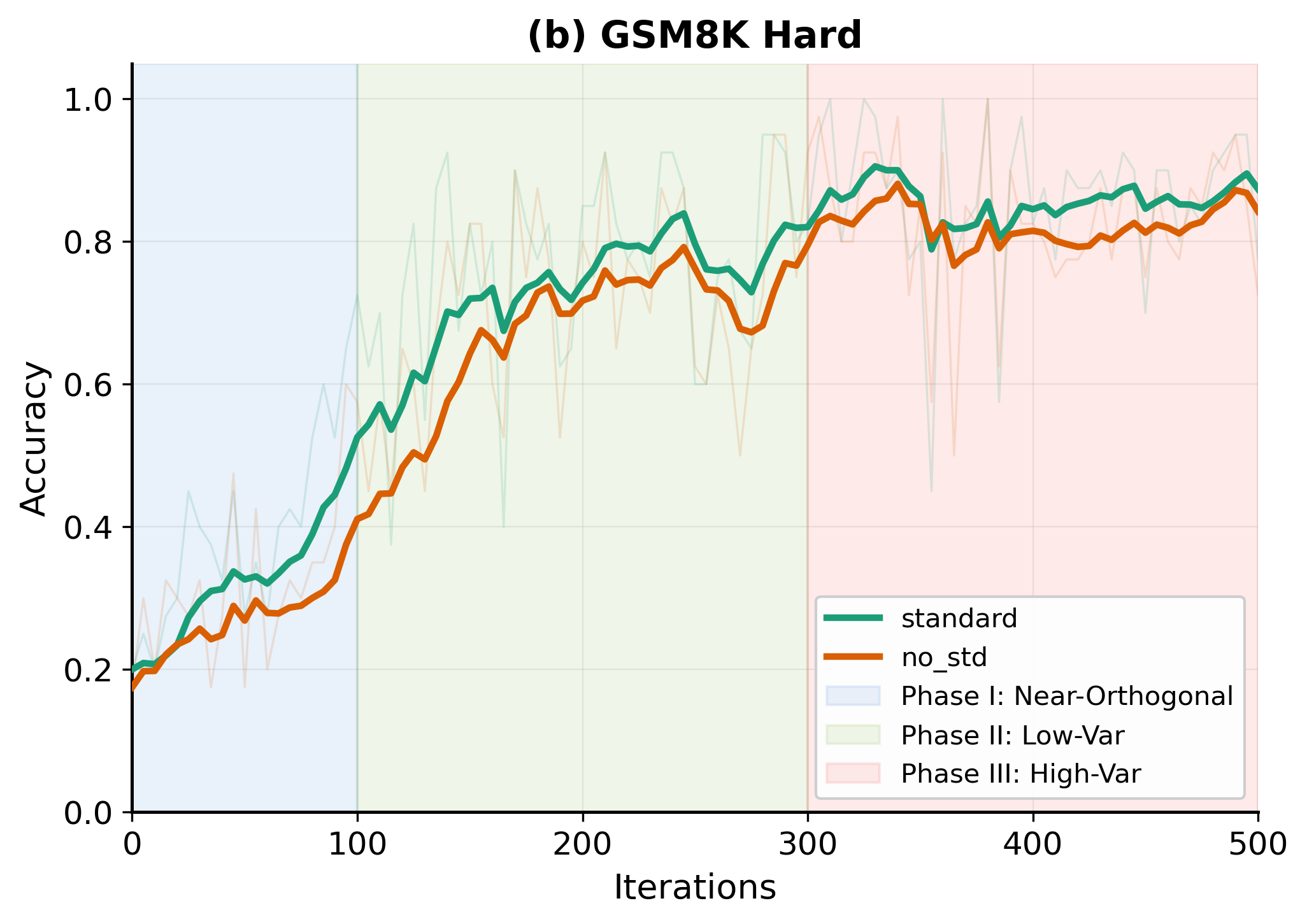}
        \caption{GSM8K Hard}
        \label{fig:norm-hard}
    \end{subfigure}
    \caption{Training accuracy vs.\ iterations on GSM8K Easy (a) and Hard (b) 
    with phase annotations. 
    \textit{standard} (green) uses variance normalization; 
    \textit{no\_std} (orange) omits normalization. 
    Shaded regions indicate three training regimes based on gradient geometry: 
    \textbf{Phase~I} (blue, 0--100): near-orthogonal regime where high reward variance 
    leads to comparable performance despite favorable geometric conditions;
    \textbf{Phase~II} (green, 100--300): low-variance regime where reduced orthogonality 
    is offset by stable cross-prompt interactions, enabling \textit{standard} 
    to establish its advantage; 
    \textbf{Phase~III} (red, 300--500): high-variance regime where increased 
    heterogeneity in gradient similarities limits further acceleration,
    though \textit{standard} maintains its 2--3\% lead.
    Curves are smoothed with exponential moving average ($\alpha = 0.18$).}
    \label{fig:norm}
\end{figure}

\section{Conclusion}
\label{sec:con}
%We show that GRPO's normalization can be understood as an adaptive gradient mechanism, where reward variance controls the local curvature of the policy gradient and adjusts step sizes accordingly. This perspective explains its empirical advantages over unnormalized REINFORCE. Our theoretical results establish faster convergence under an orthogonal representation assumption, while experiments on synthetic tasks and GSM8K confirm that normalization improves stability and convergence, especially on harder questions with high reward variance. These findings provide a theoretical foundation for the success of GRPO and point to adaptive gradient mechanisms as a promising direction for designing robust critic-free RL algorithms for LLM training.

In this paper, we show that GRPO's normalization can be viewed as a curvature-adaptive gradient mechanism: within-prompt reward variance estimates local smoothness of the  policy gradient and rescales updates accordingly. This lens clarifies when normalization helps, most notably when curvature varies across prompts or iterations and cross-prompt interference is controlled, and why GRPO can outperform REINFORCE beyond simple variance reduction. Theoretically, we prove faster convergence for GRPO under reasonable and mild  assumptions. Empirically, experiments on GSM8K and MATH validate the curvature–variance connection and show that normalization improves stability and convergence, with the largest gains on harder, higher-variance prompts. Overall, our results provide a principled foundation for GRPO’s effectiveness and highlight adaptive scaling as a useful design principle for critic-free RL in LLM post-training.

\newpage
%\section*{Impact Statement}This paper presents work whose goal is to advance the field of MachineLearning. There are many potential societal consequences of our work, none which we feel must be specifically highlighted here.

% In the unusual situation where you want a paper to appear in the
% references without citing it in the main text, use \nocite

\bibliography{example_paper}
\bibliographystyle{icml2026}

%%%%%%%%%%%%%%%%%%%%%%%%%%%%%%%%%%%%%%%%%%%%%%%%%%%%%%%%%%%%%%%%%%%%%%%%%%%%%%%
%%%%%%%%%%%%%%%%%%%%%%%%%%%%%%%%%%%%%%%%%%%%%%%%%%%%%%%%%%%%%%%%%%%%%%%%%%%%%%%
% APPENDIX
%%%%%%%%%%%%%%%%%%%%%%%%%%%%%%%%%%%%%%%%%%%%%%%%%%%%%%%%%%%%%%%%%%%%%%%%%%%%%%%
%%%%%%%%%%%%%%%%%%%%%%%%%%%%%%%%%%%%%%%%%%%%%%%%%%%%%%%%%%%%%%%%%%%%%%%%%%%%%%%
\newpage
\appendix
\onecolumn

\section{Table of Notation}

\begin{table}[ht]
\centering
\caption{List of notations.}
\label{tab:notation}
\begin{tabular}{ll}
\toprule
\textbf{Notations} & \textbf{Definition} \\
\midrule
$\mathcal{Q}$ & Set of questions / prompts \\
$\mathcal{O}$ & Set of possible output sequences \\
$o^\ast(q)$ & Unique correct output for prompt $q$ (\Cref{ass:ass_1}) \\
$a_i$ & Index of the correct output for prompt $q_i$ \\
${r}_i$ & Reward vector for prompt $q_i$ \\
$\pi_\theta$ & Sequence-level policy of the LLM parameterized by $\theta$ \\
$\pi_\theta(i)$ & Probability vector for prompt $q_i$ under policy $\pi_{\theta}$ \\
$\pi_\theta^\ast(i)$ & Success probability on prompt $q_i$ under policy $\pi_{\theta}$ \\
$J_i(\theta)$ & Objective function for prompt $q_i$ under policy $\pi_{\theta}$ \\
$J(\theta)$ & Global objective under policy $\pi_{\theta}$ \\
$J_{\mathrm{GRPO}}(\theta)$ & GRPO objective under policy $\pi_{\theta}$ \\
${x}_{i,j}$ & Feature vector for prompt-output pair $(q_i,o_j)$ \\
$X_i$ & Feature matrix for $q_i$ with rows ${x}_{i,1},\dots,{x}_{i,K}$ \\
$X_{\max}$ & Maximal feature norm, $X_{\max} = \max_{i\in[n]} \|X_i\|$ \\
$H(\pi_\theta(i))$ & Covariance matrix $\mathrm{diag}(\pi_\theta(i))-\pi_\theta(i)\pi_\theta(i)^\top$ \\
$\mathrm{Var}(\pi_\theta(i))$ & Variance of reward for prompt $q_i$ under policy $\pi_{\theta}$ \\
$\|\cdot\|$ & Euclidean norm (vectors) / spectral norm (matrices) \\
$\mathrm{diag}({v})$ & Diagonal matrix with vector ${v}$ on its diagonal \\
$\mathcal{B}({v},r)$ & Euclidean ball $\{{x}\in\mathbb{R}^m : \|{x}-{v}\|_2 \le r\}$ \\
$C_{i_t}$ & Prompt-wise variance bound introduced in \Cref{ass:ass_3} \\
$M$ & Cross-prompt gradient compatibility bound introduced in \Cref{ass:ass_4} \\
$R$ & Standard deviation lower bound introduced in \Cref{ass:ass_5} \\
\bottomrule
\end{tabular}
\end{table}

\section{Gradient Update Rule of REINFORCE and GRPO}
\label{app:grad_upd}
In this paper, we focus on the \emph{log-linear policy parametrization} \cite{agarwal2021theory, yuan2022linear}. Specifically, we assume that for each question-output pair $(q_i, o_j)$, there exists a constant feature vector ${x}_{i, j} \in \mathbb{R}^d$ and the policy is given by:
\begin{equation}
    \pi_{\theta}(o_j \mid q_i) := \frac{\exp({x}_{i, j}^{\top}\theta)}{\sum_{l = 1}^{K} \exp({x}_{i, l}^{\top} \theta)}.
\end{equation}
For ease of notation, we simply drop $t$ from $i_t$ whenever it clear in context.
The update of Algorithm~\ref{alg:reinforce} takes the following form \cite{lin2025rethinking}:
\begin{equation}
    \label{equ:update}
    \theta_{t} \leftarrow \theta_{t-1} + \eta X_i^{\top}\left(\diag\left(\pi_{\theta_{t-1}}(i)\right)-\pi_{\theta_{t-1}}(i) \pi_{\theta_{t-1}}^{\top}(i)\right){r}_i.
\end{equation}
Under \Cref{ass:ass_1}, the update of REINFORCE (\Cref{alg:reinforce}) can be simplified as:
\begin{equation}
\begin{aligned}
    \label{equ:update_1}
    \theta_{t} \leftarrow \theta_{t-1} + \eta \Big( \pi^*_{\theta_{t-1}}(i)(1 - \pi^*_{\theta_{t-1}}(i) {x}_{i,a_i}  
    - \pi^*_{\theta_{t-1}}(i) \sum_{j \neq a_i} [\pi_{\theta_{t-1}}(i)]_j \cdot {x}_{i,j}  \Big).
\end{aligned}
\end{equation}
Similarly, the update of GRPO (\Cref{alg:grpo}) can be simplified as:
\begin{equation}
\begin{aligned}
    \label{equ:update_2}
        \theta_{t} \leftarrow \theta_{t-1} + \eta \Big( \sqrt{\pi^*_{\theta_{t-1}}(i)(1 - \pi^*_{\theta_{t-1}}(i))} {x}_{i,a_i}  
        - \sqrt{\frac{\pi^*_{\theta_{t-1}}(i)}{1 - \pi^*_{\theta_{t-1}}(i)}} \sum_{j \neq a_i} [\pi_{\theta_{t-1}}(i)]_j \cdot {x}_{i,j} \Big).
\end{aligned}
\end{equation}

\section{Analysis of the Local Smoothness Constant}
\label{app:lemma}

\subsection{Proof of \Cref{lemma:local_smooth}} \label{appsub:lemma_1}

According to Lemma 17 in \cite{lin2025rethinking}, for any $y \in \mathbb{R}^d$, we have 
\begin{equation*}
    {y}^{\top} \nabla^2 J_{i}(\theta) {y} = \left(H\left(\pi_\theta(i)\right) {r}_i\right)^{\top}(X_i {y} \odot X_i {y})-2\left(H\left(\pi_\theta(i)\right) {r}_i\right)^{\top}(X_i {y})\left(\pi_\theta^{\top}(i) X_i {y}\right)
\end{equation*}
where $H(\pi_{\theta}(i))$ is defined as $H\left(\pi_\theta\right):=\diag\left(\pi_\theta(i)\right)-\pi_\theta(i) \pi_\theta^{\top}(i) \in \mathbb{R}^{K \times K}$ and $\odot$ denotes the Hadamard (component-wise) product. Using the triangle inequality and Cauchy-Schwarz inequality, we get
\begin{equation}
    \label{equ:norm_bound}
    \begin{aligned}
    |{y}^{\top} \nabla^2 J_{i}(\theta) {y}| &\le |\left(H\left(\pi_\theta(i)\right) {r}_i\right)^{\top}(X_i {y} \odot X_i {y})|+2|\left(H\left(\pi_\theta(i)\right) {r}_i\right)^{\top}(X_i {y})|\cdot|\left(\pi_\theta^{\top}(i) X_i {y}\right)| \\
    &\le \Vert \left(H\left(\pi_\theta(i)\right) {r}_i\right) \Vert_{\infty} \Vert X_i{y} \odot X_i{y}\Vert_1 + 2 \Vert H\left(\pi_\theta(i)\right) {r}_i\Vert \cdot \Vert X_i{y}\Vert \cdot \Vert\pi_{\theta}(i)\Vert \cdot \Vert X_i{y}\Vert \\
    &= \Vert \left(H\left(\pi_\theta(i)\right) {r}_i\right) \Vert_{\infty} \Vert X_i{y}\Vert^2 +  2 \Vert H\left(\pi_\theta(i)\right) {r}_i\Vert \cdot  \Vert\pi_{\theta}(i)\Vert \cdot \Vert X_i{y}\Vert^2 \\
    &\le \Vert \left(H\left(\pi_\theta(i)\right) {r}_i\right) \Vert_{\infty} \Vert X_i{y}\Vert^2 +  2 \Vert H\left(\pi_\theta(i)\right) {r}_i\Vert \cdot \Vert X_i{y}\Vert^2.
    \end{aligned}
\end{equation}
The last inequality follows because $\Vert \pi_{\theta}(i)\Vert \le \Vert \pi_{\theta}(i)\Vert_1 = 1$. According to \Cref{ass:ass_1}, we have
\begin{equation*}
    [H(\pi_{\theta}(i)){r}_i]_j = \begin{cases}
        \pi^*_{\theta}(i)(1 - \pi^*_{\theta}(i)), & \text{if } \ j = a_i \\
        -\pi^*_{\theta}(i) [\pi_{\theta}(i)]_j, & \text{if } \ j \neq a_i
    \end{cases}
\end{equation*}
With this expression, we get
\begin{equation}
    \label{equ:infty_bound}
    \begin{aligned}
        \Vert H(\pi_{\theta}(i)){r}_i \Vert_\infty &= \pi^*_{\theta}(i)(1 - \pi^*_{\theta}(i)),
    \end{aligned}
\end{equation}
and 
\begin{equation}
    \label{equ:2norm_bound}
    \begin{aligned}
        \Vert H(\pi_{\theta}(i)){r}_i \Vert &= \pi^*_{\theta}(i) \sqrt{(1 -\pi^*_{\theta}(i))^2 + \sum_{j \neq a_i}[\pi_{\theta}(i)]_j^2} \\
        &\le \pi^*_{\theta}(i) \sqrt{(1 -\pi^*_{\theta}(i))^2 + \sum_{j \neq a_i}[\pi_{\theta}(i)]_j(1 - \pi^*_{\theta}(i))} \\
        &= \sqrt{2} \pi^*_{\theta}(i)(1 - \pi^*_{\theta}(i)).
    \end{aligned}
\end{equation}
Combining (\ref{equ:infty_bound}) and (\ref{equ:2norm_bound}) with (\ref{equ:norm_bound}), we get
\begin{equation}
    \label{equ:final_bound}
    \begin{aligned}
        |{y}^{\top} \nabla^2 J_{i}(\theta) {y}| &\le \Vert \left(H\left(\pi_\theta(i)\right) {r}_i\right) \Vert_{\infty} \Vert X_i{y}\Vert^2 +  2 \Vert H\left(\pi_\theta(i)\right) {r}_i\Vert \cdot \Vert X_i{y}\Vert^2 \\
        &\le (2\sqrt{2} + 1)  \pi^*_{\theta}(i)(1 - \pi^*_{\theta}(i)) \Vert X_i {y}\Vert^2 \\
        &\le (2\sqrt{2} + 1)  \pi^*_{\theta}(i)(1 - \pi^*_{\theta}(i)) \Vert X_i \Vert^2 \Vert {y}\Vert^2 \\
        &\le 4 \pi^*_{\theta}(i)(1 - \pi^*_{\theta}(i)) X_{\max}^2 \Vert {y}\Vert^2
    \end{aligned}
\end{equation}
where the third inequality is due to the definition of operator norm, and the last inequality is by definition of $X_{\max}$. Note that 
\begin{equation*}
    \Vert \nabla^2 J_i(\theta) \Vert = \max_{{y}} \frac{|{y}^{\top} \nabla^2 J_{i}(\theta) {y}|}{\Vert {y}\Vert^2}
\end{equation*}
for symmetric Hessian matrix $\nabla^2 J_i(\theta)$, which completes the proof.

\subsection{Proof of \Cref{lemma:lipschitz}} \label{appsub:lemma_2}

According to~(\ref{equ:update_1}), the gradient of $J_{i}(\theta)$ takes the following form:
\begin{equation*}
    \nabla J_{i}(\theta) = {x}_{a_i}^{\top} (1 - \pi^*_{\theta_t}(i))\pi^*_{\theta_t}(i) - \sum_{j \neq a_i} {x}_j^{\top} \pi_{\theta_t}(i)_j \cdot \pi^*_{\theta_t}(i).
\end{equation*}
Note that a matrix's operator norm is larger than the norm of any of its row vector, we get
\begin{align*}
    \Vert \nabla J_{i}(\theta) \Vert &\le \Vert {x}_{a_i} \Vert (1 - \pi^*_{\theta_t}(i))\pi^*_{\theta_t}(i) + \sum_{j \neq a_i} \Vert {x}_j \Vert \pi_{\theta_t}(i)_j \cdot \pi^*_{\theta_t}(i) \\
    &\le \Vert X_i \Vert (1 - \pi^*_{\theta_t}(i))\pi^*_{\theta_t}(i) + \sum_{j \neq a_i} \Vert X_i \Vert \pi_{\theta_t}(i)_j \cdot \pi^*_{\theta_t}(i) \\
    &= 2 \Vert X_i \Vert (1 - \pi^*_{\theta_t}(i))\pi^*_{\theta_t}(i) \\
    &\le \frac{1}{2} X_{\max}
\end{align*}
where the last inequality is due to the definition of $X_{\max}$, finishing the proof.

\subsection{Proof of \Cref{lemma:step_smooth}}
\label{appsub:lemma_3}

By \Cref{ass:ass_1}, the objective $J_i(\theta)$ is same as $\pi_{\theta}^*(i)$. From \Cref{lemma:lipschitz}, $J_i(\theta)$ is $\tfrac{1}{2}X_{\max}$-Lipschitz. Consequently, for any $\theta' \in \mathcal{B}\!\left(\theta, \tfrac{1}{X_{\max}} \sqrt{\pi_{\theta}^*(i)\big(1 - \pi_{\theta}^*(i)\big)}\right)$, we have
\begin{equation*}
    \big|\pi_{\theta'}^*(i) - \pi_{\theta}^*(i)\big| 
    \le \tfrac{1}{2}X_{\max} \cdot \tfrac{1}{X_{\max}} \cdot \sqrt{\pi_{\theta}^*(i)\big(1 - \pi_{\theta}^*(i)\big)} 
    = \tfrac{1}{2}\sqrt{\pi_{\theta}^*(i)\big(1 - \pi_{\theta}^*(i)\big)}.
\end{equation*}
Combining with \Cref{lemma:local_smooth}, 
\begin{equation*}
    \Vert \nabla^2J_i(\theta')\Vert \le \max_{l}\, 4X_{\max}^2 \cdot l(1-l)
\end{equation*}
over $\mathcal{B}\!\left(\theta, \tfrac{1}{X_{\max}} \sqrt{\pi_{\theta}^*(i)\big(1 - \pi_{\theta}^*(i)\big)}\right)$, where $l$ satisfies 
\[
\big|\,l - \pi_{\theta}^*(i)\,\big| \le \tfrac{1}{2}\sqrt{\pi_{\theta}^*(i)\big(1 - \pi_{\theta}^*(i)\big)}.
\]
We denote $\pi_{\theta}^*(i)$ as $a$. Thus, proving \Cref{lemma:step_smooth} is equivalent as proving 
\begin{equation*}
    f(a) \coloneqq \max_{l\in [a - \frac{\sqrt{a(1-a)}}{2}, a + \frac{\sqrt{a(1-a)}}{2}]} \frac{4l(1-l)}{\sqrt{a(1-a)}} \le \frac{5}{2}.
\end{equation*}
WLOG, we assume $a \in [0, \frac{1}{2}]$ and consider two cases.

\textbf{Case 1:} When $a \in [\frac{1}{2} - \frac{\sqrt{5}}{10}, \frac{1}{2}]$, we know that
\begin{equation*}
    \frac{1}{2} \in [a - \frac{\sqrt{a(1-a)}}{2}, a + \frac{\sqrt{a(1-a)}}{2}],
\end{equation*}
which implies that
\begin{equation*}
    f(a) = \frac{1}{\sqrt{a(1-a)}} \le f(\frac{1}{2} - \frac{\sqrt{5}}{10}) = \sqrt{5} \le \frac{5}{2}.
\end{equation*}
\textbf{Case 2:} When $a \in [0, \frac{1}{2} - \frac{\sqrt{5}}{10}]$, we know that
\begin{equation*}
    \frac{1}{2} \not \in [a - \frac{\sqrt{a(1-a)}}{2}, a + \frac{\sqrt{a(1-a)}}{2}].
\end{equation*}
Because $g(l) = l(1-l)$ is monotonically increasing in $[0, \frac{1}{2}]$, we can obtain  
\begin{equation*}
    f(a) = \frac{4(a + \frac{\sqrt{a(1-a)}}{2})(1 - a - \frac{\sqrt{a(1-a)}}{2})}{\sqrt{a(1-a)}} = 3 \sqrt{a(1-a)}  + (2 - 4a).
\end{equation*}
$f(a)$ takes its maximum when $a = \frac{1}{10}$ and $f(a) = \frac{5}{2}$. 

Combining the above two cases, we conclude the lemma.

\section{Convergence Analysis under Orthogonal Representation Assumption}
\label{app:thm}

\subsection{Auxiliary Lemma}
\begin{lemma}
    \label{lemma:auxiliary_1}
    Under \Cref{ass:ass_1} and~\ref{ass:ass_2}, for any $i, j \in [n], i \neq j$ and $\theta \in \mathbb{R}^d$, we have 
    \begin{equation}
         \nabla J_{i}(\theta)^{\top} \nabla J_{j}(\theta) = 0
    \end{equation}
\end{lemma}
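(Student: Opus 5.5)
The plan is to write each per-prompt gradient explicitly in the form $X_i^\top v_i$ for some vector $v_i \in \mathbb{R}^K$, and then let \Cref{ass:ass_2} kill the cross term. Concretely, under the log-linear parametrization the gradient of $J_i$ is
\[
\nabla J_i(\theta) = X_i^{\top}\bigl(\diag(\pi_\theta(i)) - \pi_\theta(i)\pi_\theta(i)^{\top}\bigr){r}_i = X_i^{\top} H(\pi_\theta(i)){r}_i ,
\]
which is exactly the update direction already recorded in (\ref{equ:update}) (see \cite{lin2025rethinking}). I would briefly re-derive it: $\nabla_\theta \ln \pi_\theta(o_j\mid q_i) = {x}_{i,j} - X_i^\top \pi_\theta(i)$. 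Hence $\nabla J_i(\theta) = \sum_j [\pi_\theta(i)]_j [{r}_i]_j\bigl({x}_{i,j} - X_i^\top\pi_\theta(i)\bigr)$, which equals $X_i^\top\bigl(\diag(\pi_\theta(i)){r}_i - \pi_\theta(i)\,\pi_\theta(i)^\top {r}_i\bigr)$.

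Set $v_i := H(\pi_\theta(i)){r}_i \in \mathbb{R}^K$; under \Cref{ass:ass_1} its entries are given explicitly in \Cref{appsub:lemma_1}, though only its existence matters here. Then for $i \neq j$,
\[
\nabla J_i(\theta)^\top \nabla J_j(\theta) = v_i^\top X_i X_j^\top v_j = v_i^\top \, 0 \, v_j = 0,
\]
by \Cref{ass:ass_2}. This completes the argument, and it holds for every $\theta$ since $v_i$ and $v_j$ are merely finite vectors.

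There is essentially no obstacle. The only point needing care is to express the gradient as $X_i^\top(\cdot)$, i.e., in the row space of $X_i$, rather than in the coordinate form of (\ref{equ:update_1}). Once that is done, the orthogonality $X_iX_j^\top = 0$ applies directly. \Cref{ass:ass_1} is not actually needed for the vanishing; it only simplifies $v_i$.
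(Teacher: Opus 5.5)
Your proof is correct and follows the paper's argument: the paper also writes $\nabla J_i(\theta) = X_i^{\top}\bigl(\diag(\pi_\theta(i)) - \pi_\theta(i)\pi_\theta(i)^{\top}\bigr){r}_i$ via (\ref{equ:update}) and then uses $X_iX_j^{\top} = 0$ to make the inner product vanish. Your remark that \Cref{ass:ass_1} is not actually needed for the conclusion is also accurate.
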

\begin{proof}
    According to (\ref{equ:update}), we get
    \begin{align*}
        \nabla J_{i}(\theta)^{\top} \nabla J_{j}(\theta) &=  {r}_i^{\top}\left(\diag\left(\pi_{\theta}(i)\right)-\pi_{\theta}(i) \pi_{\theta}^{\top}(i)\right)X_iX_j^{\top}\left(\diag\left(\pi_{\theta}(j)\right)-\pi_{\theta}(j) \pi_{\theta}^{\top}(j)\right){r}_j \\
        &= {r}_i^{\top}\left(\diag\left(\pi_{\theta}(i)\right)-\pi_{\theta}(i) \pi_{\theta}^{\top}(i)\right){0}\left(\diag\left(\pi_{\theta}(j)\right)-\pi_{\theta}(j) \pi_{\theta}^{\top}(j)\right){r}_j \\
        &= 0,
    \end{align*}
    where the second step is by \Cref{ass:ass_2}.
\end{proof}

\subsection{Proof of \Cref{thm:reinforce}}
\label{appsub:thm_1}
We consider a specific question $q_l$. Combining \Cref{lemma:auxiliary_1} with \textit{log-linear policy parameterization} in our setting, if question $q_{i_t}$ is selected on iteration $t$ in \Cref{alg:reinforce}, we get
\begin{equation}
    \label{equ:seperate}
    \begin{aligned}
         J_j(\theta_{t}) &= J_j(\theta_{t-1} + \eta \nabla J_{i}(\theta_{t-1})) \\
         &= J_j(\theta_{t-1})
    \end{aligned}
\end{equation}
for any $i_t \neq l$. That is, the parameter update on question $q_{i_t}$ will not affect the expected reward on other questions. 

If question $i_t = l$ is selected on iteration $t$ in \Cref{alg:reinforce}, we have 
\begin{equation}
    \label{equ:single_reinforce}
    \begin{aligned}
        J_l(\theta_t) - J_l(\theta_{t-1}) &\ge \langle\theta_t -\theta_{t-1}, \nabla J_l(\theta_{t-1})\rangle - \frac{X_{\max}^2}{2} \Vert \theta_t - \theta_{t-1}\Vert^2 \\
        &= (\eta - \frac{X_{\max}^2}{2}\eta^2) \Vert \nabla J_{l}(\theta_{t-1})\Vert^2 \\
        &= \frac{1}{2X_{\max}^2} \Vert \nabla J_{l}(\theta_{t-1})\Vert^2
    \end{aligned}
\end{equation}
where the first step is by \Cref{cor:global_smooth}, which also indicate that $J_i(\theta)$ is $X_{\max}^2$-weakly convex. Combining (\ref{equ:seperate}) and (\ref{equ:single_reinforce}) and taking the expectation over $i_t$, we get
\begin{equation}
    \label{equ:tele_reinforce}
    \begin{aligned}
        \mathbb{E}[J_l(\theta_t)] - \mathbb{E}[J_l(\theta_{t-1})] &\ge \frac{1}{2n X_{\max}^2} \Vert \nabla J_{l}(\theta_{t-1})\Vert^2.
    \end{aligned}
\end{equation} 
Summing up (\ref{equ:tele_reinforce}) for $t = 1, \dots, T$, we have
\begin{equation*}
    \frac{1}{2nX_{\max}^2} \sum_{t = 0}^{T-1} \mathbb{E}[\Vert J_l(\theta_{t-1})\Vert^2] \le \mathbb{E}[J_l(\theta_T)] - J_l(\theta_0) \le 1 - \pi^*_{\theta_0}(l).
\end{equation*}
This directly leads to 
\begin{equation*}
    \min_{t \in \{0, 1, \dots, T-1\}} \mathbb{E}[\Vert\nabla J_l(\theta_t)\Vert^2] \le \frac{2n(1 - \pi_{\theta_0}^*(l))X_{\max}^2}{T}.
\end{equation*}

\subsection{Proof of \Cref{thm:grpo}}
\label{appsub:thm_2}
Similar to (\ref{equ:seperate}) in the proof of \Cref{thm:reinforce}, the gradient update based on question $q_i$ does not affect the objective for question $q_l$ if $i \neq l$. That is,
\begin{equation}
    \label{equ:case}
    J_{l}(\theta_t) = \begin{cases}
        J_l(\theta_{t-1}), & \text{if $i_t \neq l$} \\
        J_l(\theta_{t}), & \text{if $i_t = l$}.
    \end{cases}
\end{equation}
Consider the case where $i_t = l$, from the parameter update rule in \Cref{alg:grpo}, we get
\begin{equation*}
    \theta_{t} = \theta_{t-1} + \eta \Big( \sqrt{\pi^*_{\theta_{t-1}}(l)(1 - \pi^*_{\theta_{t-1}}(l))} {x}_{l,a_l}  - \sqrt{\frac{\pi^*_{\theta_{t-1}}(l)}{1 - \pi^*_{\theta_{t-1}}(l)}} \sum_{j \neq a_l} [\pi_{\theta_{t-1}}(l)]_j \cdot {x}_{l,j} \Big).
\end{equation*}
Also, by setting $\eta = \frac{1}{2X_{\max}^2}$, we have 
\begin{align*}
    &\Vert \eta \Big( \sqrt{\pi^*_{\theta_{t-1}}(l)(1 - \pi^*_{\theta_{t-1}}(l))} {x}_{l,a_l}  - \sqrt{\frac{\pi^*_{\theta_{t-1}}(l)}{1 - \pi^*_{\theta_{t-1}}(l)}} \sum_{j \neq a_l} [\pi_{\theta_{t-1}}(l)]_j \cdot {x}_{l,j} \Big) \Vert \\
    = &\frac{1}{2 X_{\max}^2}\Vert \Big( \sqrt{\pi^*_{\theta_{t-1}}(l)(1 - \pi^*_{\theta_{t-1}}(l))} {x}_{l,a_l}  - \sqrt{\frac{\pi^*_{\theta_{t-1}}(l)}{1 - \pi^*_{\theta_{t-1}}(l)}} \sum_{j \neq a_l} [\pi_{\theta_{t-1}}(l)]_j \cdot {x}_{l,j} \Big) \Vert \\
    \le &\frac{1}{2 X_{\max}^2} \Big( \sqrt{\pi^*_{\theta_{t-1}}(l)(1 - \pi^*_{\theta_{t-1}}(l))} \Vert{x}_{l,a_l} \Vert + \sqrt{\frac{\pi^*_{\theta_{t-1}}(l)}{1 - \pi^*_{\theta_{t-1}}(l)}}\sum_{j \neq a_l} [\pi_{\theta_{t-1}}(l)]_j \cdot \Vert {x}_{l,j} \Vert\Big) \\
    \le & \frac{1}{2X_{\max}^2}(2\sqrt{\pi^*_{\theta_{t-1}}(l)(1 - \pi^*_{\theta_{t-1}}(l))} X_{\max}) \\
    = &\frac{1}{X_{\max}} \cdot \sqrt{\pi_{\theta_{t-1}}^*(l) (1 - \pi_{\theta_{t-1}}^*(l))}.
\end{align*}
This implies that $\theta_{t} \in \mathcal{B}(\theta, \frac{1}{X_{\max}} \cdot \sqrt{\pi_{\theta_{t-1}}^*(l) (1 - \pi_{\theta_{t-1}}^*(l))})$. According to \Cref{lemma:step_smooth}, we could obtain
\begin{equation}
    \label{equ:single_grpo}
    \begin{aligned}
        J_l(\theta_t) &\ge J_l(\theta_{t-1}) + \langle\theta_t -\theta_{t-1}, \nabla J_l(\theta_{t-1})\rangle - \frac{5}{4} X_{\max}^2 \cdot \sqrt{\pi_{\theta_{t-1}}^*(l) (1 - \pi_{\theta_{t-1}}^*(l))} \Vert \theta_t - \theta_{t-1}\Vert^2 \\
        &= J_l(\theta_{t-1}) + \frac{3}{16X_{\max}^2 \sqrt{\pi_{\theta}^*(l) (1 - \pi_{\theta}^*(l))}} \Vert \nabla J_l(\theta_{t-1})\Vert^2 \\
        &\ge J_l(\theta_{t-1}) + \frac{3}{16X_{\max}^2 C_{l_{t-1}}} \Vert \nabla J_l(\theta_{t-1})\Vert^2
    \end{aligned}    
\end{equation}
where the last step is by \Cref{ass:ass_3}. Combining \eqref{equ:case} and \eqref{equ:single_grpo} and taking the expectation over $i_t$, we obtain
\begin{equation}
    \label{equ:tele_grpo}
    \mathbb{E}[J_l(\theta_t)] \ge \mathbb{E}[J_l(\theta_{t-1})] + \frac{3}{16nX_{\max}^2 C_{l_{t-1}}} \mathbb{E}[\Vert \nabla J(\theta_{t-1})\Vert^2].
\end{equation}
Summing up (\ref{equ:tele_grpo}) for $t = 1, \dots, T$, we get
\begin{equation}
    \label{equ:one_grpo}
    \mathbb{E}[J_l(\theta_T)] \ge J_l(\theta_{0}) + \sum_{t = 0}^{T-1} \frac{3}{16nX_{\max}^2 C_{l_{t}}} \mathbb{E}[\Vert \nabla J(\theta_{t-1})\Vert^2].
\end{equation}
According to the Cauchy-Schwarz inequality, we have
\begin{equation*}
    \min_{t \in \{0, 1, \dots, T-1\}} \mathbb{E}[\Vert\nabla J_l(\theta_t)\Vert^2] \le \frac{2n(1 - \pi_{\theta_0}^*(l))X_{\max}^2}{T}  \frac{8 \sum_{t = 0}^{T-1} C_{l_{t}}}{3T}.
\end{equation*}

\section{Convergence Analysis under Relaxed Assumptions}
\label{app:thm_2}

\subsection{Proof of \Cref{thm:reinforce_2}}
\label{appsub:thm_3}
Following the proof of \Cref{thm:reinforce} in \Cref{appsub:thm_1}, for a fixed question $q_l$, we analyze the update at timestep $t$ by distinguishing between the two cases $i_t \neq l$ and $i_t = l$.

If question $i_t = l$ is selected at iteration $t$ in \Cref{alg:reinforce}, then
\begin{equation}
    \label{equ:single_reinforce_2}
    \begin{aligned}
        J_l(\theta_t) - J_l(\theta_{t-1}) &\ge \langle\theta_t -\theta_{t-1}, \nabla J_l(\theta_{t-1})\rangle - \frac{X_{\max}^2}{2} \Vert \theta_t - \theta_{t-1}\Vert^2 \\
        &= (\eta - \frac{X_{\max}^2}{2}\eta^2) \Vert \nabla J_{l}(\theta_{t-1})\Vert^2 \\
        &\ge \frac{1}{2\max(1, \frac{M}{2})X_{\max}^2} \Vert \nabla J_{l}(\theta_{t-1})\Vert^2
    \end{aligned}
\end{equation}
with the same argument as in \eqref{equ:single_reinforce}. 

If question $i_t \neq l$ is selected at at iteration $t$ in \Cref{alg:reinforce}, applying Taylor expansion, we get
\begin{equation}
    \begin{aligned}
        J_l(\theta_t) - J_l(\theta_{t-1}) &= \langle\theta_t -\theta_{t-1}, \nabla J_{l}(\theta_{t-1})\rangle + \frac{1}{2}\eta^2 \Big(\nabla J_{i_t}(\theta_{t-1})\Big)^{\top} \nabla^2J_l(\theta') \Big(\nabla J_{i_t}(\theta_{t-1})\Big)
    \end{aligned}
\end{equation}
for some $\theta' \in \mathbb{R}^d$. According to Lemma 17 in \cite{lin2025rethinking} and \eqref{equ:update}, we have
\begin{equation}
    \label{equ:neq_reinforce}
    \begin{aligned}
        J_l(\theta_t) - J_l(\theta_{t-1}) =& \eta \langle \nabla J_l(\theta_{t-1}), \nabla J_{i_t}(\theta_{t-1}) \rangle \\
        &+ \frac{1}{2} \eta^2 (H(\pi_{\theta'}(l)) {r}_l)^{\top}(X_l \nabla J_{i_t}(\theta_{t-1}) \odot X_l \nabla J_{i_t}(\theta_{t-1})) \\
        &+ \eta^2\left(H\left(\pi_{\theta'}(l)\right) {r}_l\right)^{\top}(X_l \nabla J_{i_t}(\theta_{t-1}))\left(\pi_{\theta'}(l)^{\top} X_l \nabla J_{i_t}(\theta_{t-1})\right) \\
        \ge& \eta \langle \nabla J_l(\theta_t), \nabla J_{i_t}(\theta_t) \rangle - 2\eta^2\pi_{\theta'}^*(l)(1 - \pi_{\theta'}^*(l)) \Vert X_l \nabla J_{i_t}(\theta_{t-1})\Vert^2 \\
        \ge& \eta \langle \nabla J_l(\theta_t), \nabla J_{i_t}(\theta_t) \rangle - \frac{\eta^2}{2} \Vert X_l \nabla J_{i_t}(\theta_{t-1})\Vert^2 \\
        \ge& \frac{\eta \Vert X_l \nabla J_{i_t}(\theta_{t-1})\Vert^2}{M \Vert X_l \Vert^2} - \frac{\eta^2}{2} \Vert X_l \nabla J_{i_t}(\theta_{t-1})\Vert^2 \\
        \ge& 0,
    \end{aligned}
\end{equation}
where the second inequality is due to \eqref{equ:final_bound} and the fourth inequality is because \Cref{ass:ass_4}. Combining \eqref{equ:single_reinforce_2} and \eqref{equ:neq_reinforce}  and taking the expectation over $i_t$, we get
\begin{equation}
    \label{equ:tele_reinforce_2}
    \begin{aligned}
        \mathbb{E}[J_l(\theta_t)] - \mathbb{E}[J_l(\theta_{t-1})] &\ge \frac{1}{2\max(1, \frac{M}{2})n X_{\max}^2} \Vert \nabla J_{l}(\theta_{t-1})\Vert^2.
    \end{aligned}
\end{equation} 
Summing up (\ref{equ:tele_reinforce_2}) for $t = 1, \dots, T$, we get
\begin{equation*}
    \frac{1}{2n\max(1, \frac{M}{2})X_{\max}^2} \sum_{t = 0}^{T-1} \mathbb{E}[\Vert J_l(\theta_{t-1})\Vert^2] \le \mathbb{E}[J_l(\theta_T)] - J_l(\theta_0) \le 1 - \pi^*_{\theta_0}(l).
\end{equation*}
This directly leads to 
\begin{equation*}
    \min_{t \in \{0, 1, \dots, T-1\}} \mathbb{E}[\Vert\nabla J_l(\theta_t)\Vert^2] \le \frac{2n\max(1, \frac{M}{2})(1 - \pi_{\theta_0}^*(l))X_{\max}^2}{T}.
\end{equation*}

\subsection{Proof of \Cref{thm:grpo_2}}
\label{appsub:thm_4}
Following the proof of \Cref{thm:reinforce} in \Cref{appsub:thm_1}, for a fixed question $q_l$, we analyze the update at timestep $t$ by distinguishing between the two cases $i_t \neq l$ and $i_t = l$.

If question $i_t = l$ is selected at iteration $t$ in \Cref{alg:grpo}, then 
\begin{equation}
    \label{equ:single_grpo_2}
    \begin{aligned}
        J_l(\theta_t) &\ge J_l(\theta_{t-1}) + \langle\theta_t -\theta_{t-1}, \nabla J_l(\theta_{t-1})\rangle - \frac{5}{4} X_{\max}^2 \cdot \sqrt{\pi_{\theta_{t-1}}^*(l) (1 - \pi_{\theta_{t-1}}^*(l))} \Vert \theta_t - \theta_{t-1}\Vert^2 \\
        &= J_l(\theta_{t-1}) + \frac{3}{16 \max (R_1, \frac{5M}{8})R_2X_{\max}^2 \sqrt{\pi_{\theta}^*(l) (1 - \pi_{\theta}^*(l))}} \Vert \nabla J_l(\theta_{t-1})\Vert^2 \\
        &\ge J_l(\theta_{t-1}) + \frac{3}{16 \max (R_1, \frac{5M}{8}) R_2 X_{\max}^2 C_{l_{t-1}}} \Vert \nabla J_l(\theta_{t-1})\Vert^2
    \end{aligned}    
\end{equation}
with the same argument as in \eqref{equ:single_grpo}. 

If question $i_t \neq l$ is selected at at iteration $t$ in \Cref{alg:grpo}, applying Taylor expansion, we get
\begin{equation}
    \begin{aligned}
        J_l(\theta_t) - J_l(\theta_{t-1}) =& \frac{\eta}{\sqrt{\bV(\pi_{\theta_{t-1}}(i_t))}}\langle\theta_t -\theta_{t-1}, \nabla J_{l}(\theta_{t-1})\rangle \\
        + \frac{\eta^2}{2 \bV(\pi_{\theta_{t-1}}(i_t))} &\Big(\nabla J_{i_t}(\theta_{t-1})\Big)^{\top} \nabla^2J_l(\theta') \Big(\nabla J_{i_t}(\theta_{t-1})\Big)
    \end{aligned}
\end{equation}
for some $\theta' \in \mathbb{R}^d$. According to Lemma 17 in \citet{lin2025rethinking} and \eqref{equ:update}, we have
\begin{equation}
    \label{equ:neq_grpo}
    \begin{aligned}
        J_l(\theta_t) - J_l(\theta_{t-1}) 
        =& \frac{\eta}{\sqrt{\bV(\pi_{\theta_{t-1}}(i_t))}} \langle \nabla J_l(\theta_{t-1}), \nabla J_{i_t}(\theta_{t-1}) \rangle \\
        &+ \frac{\eta^2}{2 \bV(\pi_{\theta_{t-1}}(i_t))} (H(\pi_{\theta'}(l)) {r}_l)^{\top}(X_l \nabla J_{i_t}(\theta_{t-1}) \odot X_l \nabla J_{i_t}(\theta_{t-1})) \\
        &+ \frac{\eta^2}{\bV(\pi_{\theta_{t-1}}(i_t))}\left(H\left(\pi_{\theta'}(l)\right) {r}_l\right)^{\top}(X_l \nabla J_{i_t}(\theta_{t-1}))\left(\pi_{\theta'}(l)^{\top} X_l \nabla J_{i_t}(\theta_{t-1})\right) \\
        \ge& \frac{\eta}{\sqrt{\bV(\pi_{\theta_{t-1}}(i_t))}}\langle \nabla J_l(\theta_t), \nabla J_{i_t}(\theta_t) \rangle - \frac{5\sqrt{\bV(\pi_{\theta_{t-1}}(l))}\eta^2}{4 \bV(\pi_{\theta_{t-1}}(i_t))} \Vert X_l \nabla J_{i_t}(\theta_{t-1})\Vert^2 \\
        \ge& \frac{\eta \Vert X_l \nabla J_{i_t}(\theta_{t-1})\Vert^2}{M \cdot \sqrt{\bV(\pi_{\theta_{t-1}}(i_t))}\Vert X_l \Vert^2} - \frac{5\sqrt{\bV(\pi_{\theta_{t-1}}(l))}\eta^2}{4\bV(\pi_{\theta_{t-1}}(i_t))} \Vert X_l \nabla J_{i_t}(\theta_{t-1})\Vert^2 \\
        \ge& 0,
    \end{aligned}
\end{equation}
where the second inequality is due to \Cref{lemma:step_smooth} and the third inequality is because \Cref{ass:ass_4} and \Cref{ass:ass_5}.

Combining \eqref{equ:single_grpo_2} and \eqref{equ:neq_grpo} and taking the expectation over $i_t$, we obtain
\begin{equation}
    \label{equ:tele_grpo_2}
    \mathbb{E}[J_l(\theta_t)] \ge \mathbb{E}[J_l(\theta_{t-1})] + \frac{3}{16 n  \max (R_1R_2, \frac{M}{2})X_{\max}^2 C_{l_{t-1}}} \mathbb{E}[\Vert \nabla J(\theta_{t-1})\Vert^2].
\end{equation}
Summing up (\ref{equ:tele_grpo_2}) for $t = 1, \dots, T$, we get
\begin{equation}
    \label{equ:one_grpo_2}
    \mathbb{E}[J_l(\theta_T)] \ge J_l(\theta_{0}) + \sum_{t = 0}^{T-1} \frac{3}{16n\max (R_1R_2, \frac{M}{2})X_{\max}^2 C_{l_{t}}} \mathbb{E}[\Vert \nabla J(\theta_{t-1})\Vert^2].
\end{equation}
According to the Cauchy-Schwarz inequality, we have
\begin{equation*}
    \min_{t \in \{0, 1, \dots, T-1\}} \mathbb{E}[\Vert\nabla J_l(\theta_t)\Vert^2] \le \frac{2n\max (R_1R_2, \frac{M}{2})(1 - \pi_{\theta_0}^*(l))X_{\max}^2}{T}  \frac{8 \sum_{t = 0}^{T-1} C_{l_{t}}}{3T}.
\end{equation*}

\section{Additional Experimental Results}
\label{app:additional_results}

\subsection{Evolution of Gradient Orthogonality During Training}

To complement the validation of Assumption~\ref{ass:ass_2} (orthogonal representation) in Section~\ref{sec:experiments_for_theory}, we examine how cross-prompt gradient interactions evolve throughout the training process. This analysis provides empirical support for both the exact orthogonality assumption (Assumption~\ref{ass:ass_2}) and the relaxed assumption (Assumption~\ref{ass:ass_4}).

\paragraph{Experimental Setup.} We track pairwise gradient cosine similarities across 100 randomly sampled questions from GSM8K, yielding $\binom{100}{2} = 4{,}950$ gradient pairs per checkpoint. For each pair of distinct prompts $i \neq j$, we compute
\[
    \cos(\nabla J_i(\theta_t), \nabla J_j(\theta_t)) = \frac{\langle \nabla J_i(\theta_t), \nabla J_j(\theta_t) \rangle}{\|\nabla J_i(\theta_t)\| \|\nabla J_j(\theta_t)\|}.
\]
We analyze the distribution at three representative checkpoints: Step~0 (initialization), Step~260 (mid-training), and Step~500 (convergence).
\begin{figure}
    \centering
    \includegraphics[width=1.0\linewidth]{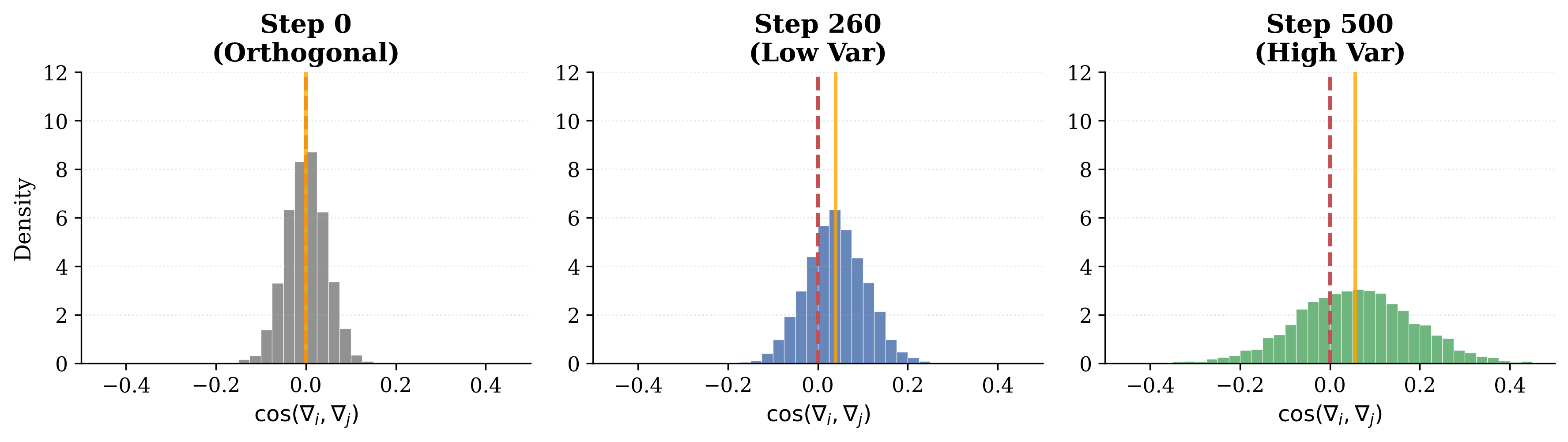}
    \caption{Evolution of Pairwise Gradient Cosine Similarity During GRPO Training on GSM8K. Distribution of cosine similarities between per-prompt gradient pairs across 100 questions ( 4,950 pairs) at three training stages. The red dashed line indicates $\cos =0$ (orthogonality), and the orange solid line indicates the mean. (Left) Step 0: Near-perfect orthogonality with over 90\% of pairs satisfying $|\cos |<0.1$. (Middle) Step 260: Gradients begin developing positive correlations while maintaining low variance (std $=0.066$ ). (Right) Step 500: Increased variance (std $=0.130$ ) with the distribution spreading while remaining centered near zero.}
\label{fig:3phases}
\end{figure}
\paragraph{Results.} Figure~\ref{fig:3phases} illustrates the evolution of gradient cosine similarity distributions across training. We observe three distinct phases:

\textbf{Phase I: Orthogonal Regime (Step 0).} At initialization, gradient directions are nearly orthogonal, consistent with the theoretical expectation for high-dimensional random vectors. Specifically, over 80\% of pairs satisfy $|\cos| < 0.1$, with mean $\approx 0$ and standard deviation $\approx 0.045$. This confirms that Assumption~2 holds approximately at initialization, and Assumption~4 is trivially satisfied since both sides of the inequality approach zero when gradients are orthogonal.

\textbf{Phase II: Low-Variance Transition (Step 260).} As training progresses, gradients develop weak positive correlations (mean $\approx 0.04$, with 72.7\% positive pairs), indicating that prompts begin sharing beneficial update directions. The variance remains low (std $\approx 0.066$), suggesting that the model learns generalizable features without significant gradient interference. The bounded M value confirms that Assumption~4 continues to hold.

\textbf{Phase III: High-Variance Regime (Step 500).} The distribution spreads further (std $\approx 0.130$) while maintaining a weakly positive mean ($\approx 0.056$). The increased variance reflects the heterogeneous nature of mathematical reasoning tasks, some prompt pairs benefit from similar updates while others require distinct optimization directions. Despite this spread, the M bound remains finite ($M \approx 418 < 500$ for $n=100$ prompts), confirming that Assumption~\ref{ass:ass_4} is satisfied throughout training.

\paragraph{Implications.} These observations align with our theoretical framework in several ways:
\begin{itemize}[leftmargin=*,itemsep=2pt]
    \item The gradual shift from orthogonality to weak positive correlation indicates that GRPO learns shared representations across prompts without catastrophic interference.
    \item The bounded M value throughout training validates the applicability of our relaxed theoretical framework (Theorem~3.4) to practical GRPO optimization.
    \item The three-phase pattern explains the training dynamics observed in Figure~\ref{fig:3phases}: GRPO achieves maximum acceleration in Phase~I when orthogonality is strongest, maintains advantage in Phase~II as variance decreases, and the gap stabilizes in Phase~III as cross-prompt interference increases.
\end{itemize}

\subsection{Quantitative Summary}

Table~\ref{tab:gradient_stats} summarizes the key statistics across training phases, providing quantitative support for the assumptions in our theoretical analysis.

\begin{table}[h]
    \centering
    \caption{Statistics of pairwise gradient cosine similarities at different training stages (GSM8K, $n=100$ questions, 4,950 pairs).}
    \label{tab:gradient_stats}
    \begin{tabular}{lcccccc}
        \toprule
        \textbf{Stage} & \textbf{Mean} & \textbf{Std} & \textbf{$\cos > 0$} & \textbf{$|\cos| < 0.1$} & \textbf{M Bound} & \textbf{Assumption} \\
        \midrule
        Step 0 (Orthogonal)   & 0.000 & 0.045 & 50.5\% & 97.7\% & 89   & Asm.~2 holds \\
        Step 260 (Low Var)    & 0.039 & 0.066 & 72.7\% & 80.3\% & 250  & Asm.~4 holds \\
        Step 500 (High Var)   & 0.056 & 0.130 & 66.4\% & 52.4\% & 418  & Asm.~4 holds \\
        \bottomrule
    \end{tabular}
\end{table}

\subsection{Additional Results on MATH Benchmark}
\label{app:math_dataset}

To assess the generalization of our findings beyond grade-school arithmetic, we further evaluate the normalization strategies on the \textbf{MATH} benchmark (Level 2 subset), which involves more diverse and complex reasoning tasks (e.g., Algebra, Geometry).

As shown in Figure~\ref{fig:math_curve}, the training dynamics on MATH Level 2 exhibit a pattern consistent with the Hard regime observed in our main experiments (Section 4.4):

\begin{itemize}
    \item \textbf{Early Acceleration (Phase I):} The Normalized GRPO (\texttt{standard}, dashed blue line) establishes a clear performance lead early in training (steps 0--150), validating the benefit of adaptive step sizes in the initial high-variance, orthogonal regime.
    \item \textbf{Persistent Gap (Phase II):} A consistent accuracy gap (about $ 3-5\%$) is maintained throughout the middle stage of training (steps 150--400).
    \item \textbf{Late-Stage Convergence (Phase III):} Notably, towards the end of training (steps 450--500), the gap between the normalized method and the \texttt{no\_std} baseline narrows significantly. This convergence strongly supports our theoretical analysis regarding the \textit{High-Interference Regime}: as the model refines its representations for these complex problems, the loss of orthogonality introduces the benefits of normalization, causing the advantage to plateau.
\end{itemize}

These results confirm that the proposed local-curvature perspective and the identified training phases are robust across different datasets and difficulty levels.

\begin{figure}[h]
    \centering
    \includegraphics[width=0.5\linewidth]{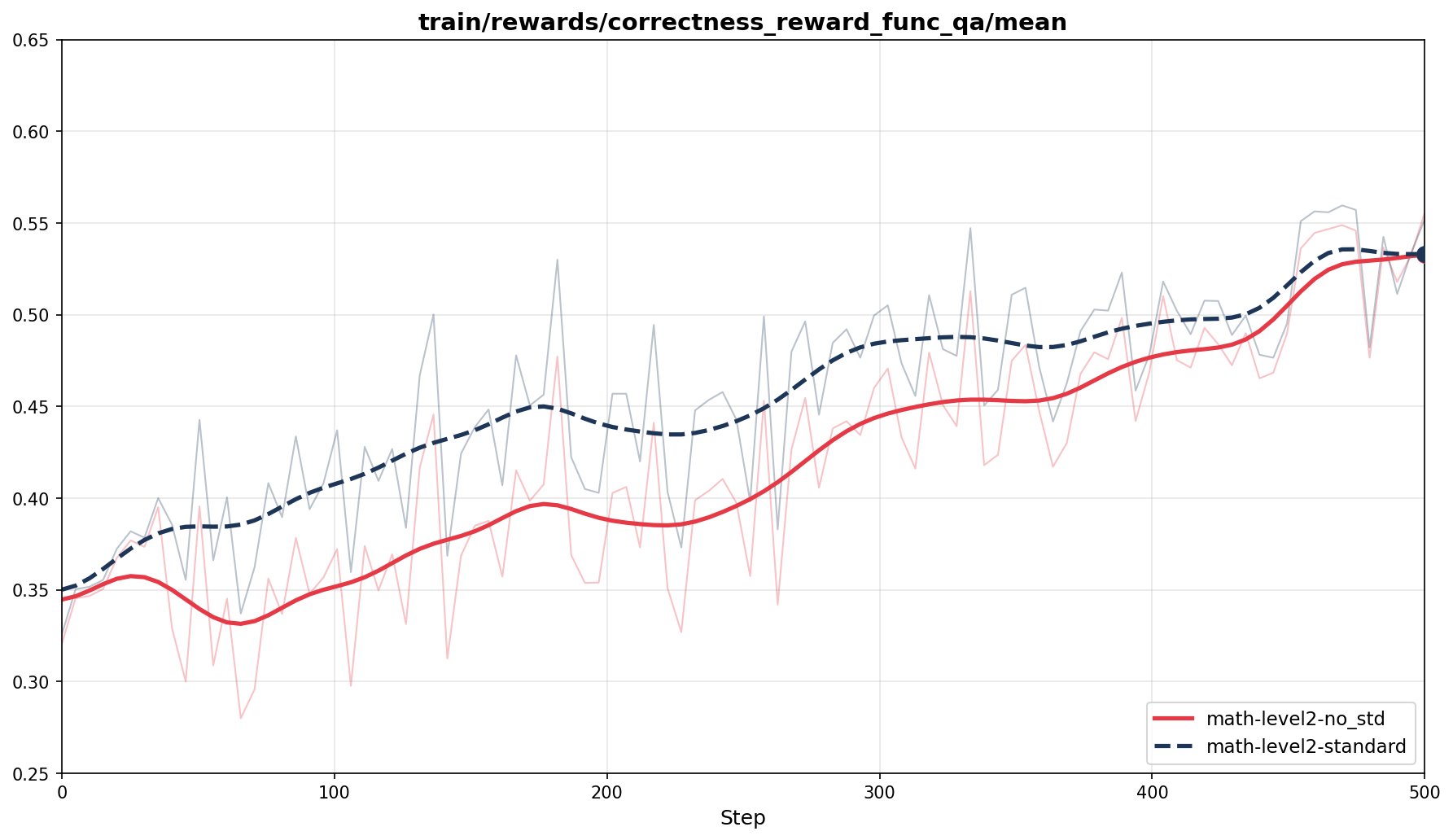}
    \caption{\textbf{Training dynamics on MATH Level 2 subset.} Similar to the GSM8K experiments, Normalized GRPO (standard) demonstrates faster initial convergence and maintains a lead for the majority of training. The narrowing gap in the final steps further evidences the impact of cross-prompt interference in the late training stages.}
    \label{fig:math_curve}
\end{figure}
\section{Details on Datasets and Training}
\label{app:experimental_details}

\subsection{Datasets}
We evaluate our method on two distinct mathematical reasoning benchmarks to assess performance across varying scopes and difficulties:

\paragraph{GSM8K \cite{cobbe2021gsm8k}}
This dataset serves as our primary benchmark for grade-school mathematical reasoning. To analyze the impact of normalization across difficulty regimes, we partition the training set into two subsets based on solution complexity:
\begin{itemize}
    \item \textbf{Easy Split ($N=4,695$):} Problems with lower reasoning complexity, serving as a low-variance control group.
    \item \textbf{Hard Split ($N=1,909$):} Problems requiring multi-step reasoning with higher reward variance.
\end{itemize}
This partitioning is performed using \texttt{Qwen2-7B-Instruct} \cite{yang2024qwen2} as an evaluator model to classify problem difficulty.

\paragraph{MATH (Level 2) \cite{hendrycks2021measuring}}
To evaluate generalization beyond grade-school arithmetic, we additionally employ the \textbf{Level 2} subset of the MATH benchmark. This subset consists of competition-style problems spanning diverse domains (e.g., algebra, geometry, number theory) with moderate difficulty, providing a testbed for robustness on varying problem structures.

\subsection{Evaluation Metrics}
Following standard protocols \cite{cobbe2021gsm8k, shao2024deepseekmath}, we report sample accuracy (equivalent to greedy pass@1). This metric measures the fraction of problems for which the model's generated answer exactly matches the ground truth. We employ standard regular expression matching to extract the final numerical answer from the model's reasoning chain.
For the orthogonality analysis in Section \ref{sec:experiments_for_theory}, we compute the cosine similarity between sentence-level embeddings of the penultimate hidden states.

\subsection{Models}
Our experiments utilize \texttt{Qwen2.5-Math-1.5B} \cite{yang2024qwen2} as the base policy model. We employ Low-Rank Adaptation (LoRA) \cite{hu2022lora} for parameter-efficient fine-tuning.
\begin{itemize}
    \item \textbf{Policy Model:} \texttt{Qwen2.5-Math-1.5B}.
    \item \textbf{Training Configuration:} We apply LoRA to all linear layers with a rank $r=64$ and alpha $\alpha=128$.
    \item \textbf{Evaluator Model:} For dataset partitioning, we use \texttt{Qwen2-7B-Instruct} to ensure the classification capability exceeds the base model's capacity.
\end{itemize}
%%%%%%%%%%%%%%%%%%%%%%%%%%%%%%%%%%%%%%%%%%%%%%%%%%%%%%%%%%%%%%%%%%%%%%%%%%%%%%%
%%%%%%%%%%%%%%%%%%%%%%%%%%%%%%%%%%%%%%%%%%%%%%%%%%%%%%%%%%%%%%%%%%%%%%%%%%%%%%%

\end{document}